\documentclass{article}
\usepackage{graphicx}
\usepackage{wrapfig}
\usepackage{caption}
\usepackage[table]{xcolor}
\PassOptionsToPackage{numbers, compress}{natbib}
 \usepackage[preprint]{neurips_2026}

\usepackage[utf8]{inputenc} 
\usepackage[T1]{fontenc}    
\usepackage{hyperref}       

\usepackage{url}            
\usepackage{booktabs}       
\usepackage{amsfonts}       
\usepackage{nicefrac}       
\usepackage{microtype}      
\usepackage{xcolor}         
\usepackage{enumitem}
\usepackage{amsmath}
\usepackage{amssymb}
\usepackage{amsthm}
\usepackage[capitalize,noabbrev]{cleveref}
\usepackage{algorithm}
\usepackage[noend]{algorithmic}
\usepackage{multirow} 
\usepackage{tabularx}
\usepackage{longtable}

\newtheorem{definition}{Definition}[section]
\newtheorem{assumption}{Assumption}[section]
\newtheorem{lemma}{Lemma}[section]
\newtheorem{theorem}{Theorem}[section]
\newtheorem{proposition}{Proposition}[section]
\newtheorem{corollary}{Corollary}[section]
\newtheorem{remark}{Remark}[section]

\newcommand{\TV}{\mathrm{TV}}
\newcommand{\Law}{\mathcal{L}}

\newcommand{\NFE}{\mathrm{NFE}}
\newcommand{\dd}{\mathrm{d}}
\makeatletter
\DeclareRobustCommand{\cev}[1]{%
  {\mathpalette\do@cev{#1}}%
}
\newcommand{\do@cev}[2]{%
  \vbox{\offinterlineskip
    \sbox\z@{$\m@th#1 x$}%
    \ialign{##\cr
\hidewidth\reflectbox{$\m@th#1\vec{}\mkern4mu$}\hidewidth\cr
      \noalign{\kern-\ht\z@}
      $\m@th#1#2$\cr
    }%
  }%
}
\makeatother

\title{Accelerating Discrete Diffusion Models with Parallel-In-Time Sampling}

\author{
  Yu Yao \\
  The University of Tokyo\\
  \texttt{yuyao@g.ecc.u-tokyo.ac.jp} \\
  \And
  Huanjian Zhou\thanks{Corresponding to: Huanjian Zhou} \\
  The University of Tokyo \\
  \texttt{zhou-huanjian185@g.ecc.u-tokyo.ac.jp} \\
  \And
  Andi Han \\
  The University of Sydney \\
  \texttt{andi.han@sydney.edu.au} \\
  \And
  Wei Huang \\
  RIKEN AIP \& The Institute of Statistical Mathematics \\
  \texttt{wei.huang.vr@riken.jp} \\
  \And
  Masashi Sugiyama \\
  RIKEN AIP \& The University of Tokyo \\
  \texttt{sugi@k.u-tokyo.ac.jp} \\
}

\usepackage{amsmath,amsfonts,bm}

\def\eqref#1{equation~\ref{#1}}

\def\1{\bm{1}}

\def\vp{{\bm{p}}}

\def\vs{{\bm{s}}}

\def\vx{{\bm{x}}}
\def\vy{{\bm{y}}}

\def\mQ{{\bm{Q}}}

\DeclareMathAlphabet{\mathsfit}{\encodingdefault}{\sfdefault}{m}{sl}
\SetMathAlphabet{\mathsfit}{bold}{\encodingdefault}{\sfdefault}{bx}{n}

\def\sD{{\mathbb{D}}}

\begin{document}

\maketitle

\begin{abstract}
Discrete diffusion models are widely used for learning and generating discrete distributions. As the generation process is inherently sequential, the acceleration of sampling is of significant importance.
In this work, we parallelize the mainstream $\tau$-leaping algorithm for absorbing discrete diffusion in a Continuous-Time Markov Chain (CTMC) framework. By leveraging the continuous-time stochastic integral form of the $\tau$-leaping algorithm and the Picard iteration method, we achieve parallel-in-time sampling acceleration and provide a proof of exponential-factorial convergence for our algorithm. We improve the overall time complexity of $\tau$-leaping under absorbing settings from ${\mathcal{O}}(d \log S)$ to ${\mathcal{O}}(\log (d\log S)\cdot \log d)$ with respect to NFE. Empirically, our method shows consistent acceleration across synthetic and real-data settings. The new sampler achieves at most $7$--$9\times$ runtime speedup for synthetic distribution, and maintains the same quality with $50\%$ fewer NFE and $1.45$--$1.86\times$ runtime speedups in image/text tasks on a single GPU. Our research expands the potential of discrete diffusion models for efficient parallel inference, with broader implications for applications such as molecular structure and language generation.
\end{abstract}

\section{Introduction}
Diffusion models over discrete spaces~\citep{austin2021structured} have become central to generative modeling for categorical data, with increasing impact across applications including molecular design~\citep{vignac2022digres}, protein engineering~\citep{gruver2023protein}, DNA sequence generation under biophysical constraints~\citep{sarkar2024designing}, and high-fidelity generation of text~\citep{zheng2023reparameterized}, music~\citep{yang2023diffsound}, and images~\citep{lezama2022discrete}. One of the mainstream discrete diffusion is the Absorbing discrete diffusion, which progressively replaces discrete tokens with a special mask or absorbing state, then learns to reconstruct the original data by reversing this corruption process. However, the algorithms underlying the above results are highly sequential and fail to fully exploit contemporary parallel computing resources for modern GPUs. While parallel-in-time algorithms for continuous diffusion models have been extensively studied~\citep{anari2024fast,chen2024accelerating, shih2023parallel, zhouparallel}, their counterparts for discrete diffusion models remain largely unexplored. This gap motivates our investigation into the question:
\begin{center}
\textit{Whether parallelization can fundamentally accelerate sampling for discrete diffusion models}?
\end{center}

\subsection{Our contributions}
In this paper, we make significant progress in both theoretical and practical sides.
The {key contributions} of this work are summarized as follows:
\begin{itemize}[leftmargin =2em]
\item  We introduce the \emph{first} \emph{parallel-in-time} algorithm for $\tau$-leaping methods \citep{campbell2022continuous,gillespie2001approximate} for absorbing discrete diffusion inference, namely the Picard $\tau$-leaping method~(Algorithm~\ref{predictor}).

\item  We provide well-established theoretical analysis of the method, proving ${\mathcal{O}}(\log (d\log S)\cdot \log d)$ time complexity with respect to NFE compared with ${\mathcal{O}}(d \log S)$ complexity of mainstream sequential methods with extra $\widetilde{\mathcal{O}}(dS)$ space complexity\footnote{We note, in this paper, that the space complexity refers to the state-level space after each iteration based the number of words~\citep{cohen2023streaming} instead of the number of bits~\citep{goldreich2008computational} to denote the
approximate required storage.} cost~(Theorem~\ref{thm:global-complexity-liang_main}). We summarize the comparison between existing methods and our results in Table \ref{table:results}.

\item  We verify our method on both synthetic and real-world tasks and achieve substantial improvements in sampling efficiency over various sequential and acceleration methods~(Section~\ref{sec:exp}). 
\end{itemize}

\begin{table}[t]
\footnotesize
\renewcommand\arraystretch{1.4}
\centering
\captionsetup{skip=8pt}
\caption{Comparisons of our parallel and sequential methods for discrete diffusion sampling given $\delta$-accuracy of the score. Here, $S$ denotes the size of vocabulary.}

\label{table:results} 

\begin{tabular}{|c|c|c|c|} 
    \hline
        Method       & Time Complexity & Space Complexity & Metrics\\
    \hline
    $\tau$-leaping \citep[Theorem 4.7]{rendiscrete} & $\mathcal{O}(\frac{d^2\log^2(d/\delta^2)}{\delta^2})$ & $\mathcal{O}(d)$ & KL-divergence\\
    \hline
    $\tau$-leaping \citep[Theorem 3.1.3]{conforti2025non} & $\mathcal{O}(\frac{dS}{\delta^2})$& $\mathcal{O}(d)$ & Total Variation \\
    \hline 
    $\tau$-leaping \citep[Theorem 2]{liangabsorb} & $\mathcal{O}(\frac{d S}{\delta})$& $\mathcal{O}(d)$ & KL-divergence\\
    \hline
    $\tau$-bridging \citep[Theorem 2]{dmitriev2026efficient} & $\mathcal{O}(\frac{\mathcal{D}(q_0)}{\delta})\leq\mathcal{O}(\frac{d\log S}{\delta})$& $\mathcal{O}(d)$ & KL-divergence\\
    \hline
    First-Hitting Sampler \citep[Theorem 4]{liang2026sharp} & Exact $d$ & $\mathcal{O}(d)$ & KL-divergence\\
    \hline
    \rowcolor{blue!15}
    Picard $\tau$-leaping [Ours, Theorem \ref{thm:global-complexity-liang_main}] & $\mathcal{O}\left(
\log\frac{d \log S}{\delta^2}
\cdot
\log\frac{d}{\delta}
\right)$ &  $ \widetilde{\mathcal{O}}(d^{2}S)$ & Total Variation\\
    \hline
\end{tabular}
\end{table}

\subsection{Technical Overview: From Coordinate Parallelism to Time Parallelism}
 Sampling algorithms for absorbing discrete diffusion can be broadly divided into \emph{Approximate Time-Discretization Methods} and \emph{Exact Simulation Methods}. The typical approximate sampler, $\tau$-leaping sampler, freezes the reverse rates over short time intervals and applies multiple coordinate-level jumps \emph{in parallel} within each step. A general stochastic-integral framework for discrete diffusion was developed by \citep{rendiscrete}, who represent discrete diffusion through Poisson random measures with evolving intensity
 but its generic $\tau$-leaping bounds are conservative for absorbing processes.

Many recent work focuses specifically on the absorbing dynamics. \citep{huang2025complexity} study the complexity of masked discrete diffusion and propose Mask-Aware Truncated Uniformization (MATU), which leverages the fact that masked tokens cannot be unmasked multiple times and adapts the truncation of outgoing rates to the number of remaining masked coordinates. \citep{liangabsorb} gives the first rigorous convergence guarantees for absorbing $\tau$-leaping under bounded score estimates. \citep{conforti2025non} provide sharper non-asymptotic bounds for masked and random-walk discrete diffusion, using the evolution and monotonicity of discrete scores to avoid strong boundedness assumptions. \citep{dmitriev2026efficient} analyze discrete diffusion through information-theoretic quantities, proving sharp upper bound that intrinsically depends on the structural properties of the target distribution. More specifically, sampling highly structural distribution with sparse dependency graph may allow more parallel-in-token unmask operations at each time step, which leads to lower time complexity.
Together, these results highlight that masking diffusion has exploitable structure beyond generic finite-state CTMCs, and show that sequential masking samplers can achieve at most linear complexity up to logarithmic factors.

Compared with previous sequential methods and analysis, our method keeps the same fine grid but approximate the sequential chain through parallelism along the time horizon. 
We group many fine intervals into a larger block and solve the whole block transition by Picard iteration. 
In each Picard round, all microsteps in the block are evaluated in parallel from the previous round's trajectory using the same block randomness generated at the beginning of the algorithm. 
Compared to the Picard methods for continuous diffusion models~\citep{chen2024accelerating, shih2023parallel, zhouparallel}, we additionally apply a first-hitting truncation inside the block for masking diffusion, preserving the absorbing structure while making the block update compatible with parallel prefix operations.
By proving an exponential-factorial contraction of the Picard error, we derive $O(\log d)$ level estimation for both number of blocks and Picard iterations, which eventually leads to $\mathrm{poly}\log(d)$ time complexity.

\subsection{Other Related Work}
\paragraph{Exact simulation of reverse process with learned score.}
Exact simulation methods, such as uniformization for uniform-rate chains~\citep{chen2024convergence} and the First-Hitting Sampler (FHS)~\citep{liang2026sharp,zheng2024masked} for absorbing chains, simulate the reverse CTMC by separating the sampling of jump times from the sampling of jump destinations. 
This removes time-discretization error and, in the absorbing case, can directly exploit the fact that each coordinate is unmasked at most once. 
In the ideal exact-score setting, such methods provide unbiased simulation of the learned reverse  with $d$ steps sampling \citep{liang2026sharp}. 
However, exact simulation does not necessarily lead to better generation quality in practice. Recent empirical results even show that exact simulation can underperform approximate solvers despite having no discretization error \citep{ren2025fast}. One possible reason is that exact samplers concentrate many jumps near the terminal phase of the reverse process, precisely where score estimation is most singular and inaccurate. 
This motivates continued study of approximate samplers, especially the $\tau$-leaping method, whose discretization can act as a stabilizing numerical approximation of learned-score errors.

\paragraph{Other acceleration strategies}
A large body of empirical work accelerates discrete diffusion by changing the decoding strategy rather than the time integration scheme. 
Parallel decoding methods unmask multiple tokens per iteration using confidence, entropy, margin, or related criteria, thereby exploiting spatial or token-level parallelism \citep{chang2022maskgit, ringel2026dependency, xie2025variational, wu2025fast, yu2025dimple}. 
These methods are effective in practice, but the sampling horizon still consists of a sequence of denoising rounds. 
Other approaches modify the generation order or introduce auxiliary planning modules. 
DDPD \citep{liu2024think} separates generation into a planner and a denoiser, allowing the model to decide which corrupted positions should be refined next; related path-planning methods study how the unmasking order affects sample quality \citep{peng2025path}. High-order solvers \citep{ren2025fast} extend ideas such as trapezoidal integration to jump processes, reducing discretization error and permitting larger steps under the same computation budget .

\section{Preliminaries on Discrete Diffusion Models}
\label{sec:preliminaries}
In this section, we briefly introduce the background knowledge about the mathematical framework of discrete diffusion and its properties under mask settings. Throughout the sampling and theoretical analysis, \(t\) denotes the forward noising time where \(t=0\) is the clean-data endpoint, and the early-stopped reverse sampler runs from \(T\) down to \(\eta>0\).

\subsection{Continuous-Time Markov Chains and Poisson Integrals}

In discrete diffusion models, the forward process is a continuous-time Markov chain (CTMC) $(\vx_t)_{t\in[0,T]}$ on a finite state space $\mathcal{X}$. Let $\vp_t\in\Delta^{|\mathcal{X}|}$ denote the law of $\vx_t$ as a column vector. The forward equation is
$  \frac{\mathrm{d}\vp_t}{\mathrm{d}t}=\mQ_t\,\vp_t,
  \mQ_t=\bigl(\mQ_t(y,x)\bigr)_{x,y\in\mathcal{X}},
  $
where $\mQ_t$ is a rate matrix with (i) $\mQ_t(x,y)\ge 0$ for $x\neq y$ and (ii) $\mQ_t(x,x)=-\sum_{y\neq x}\mQ_t(y,x)$. We write $\widetilde \mQ_t:=\mQ_t-\mathrm{diag}(\mQ_t)$ for the off-diagonal part. 
The time-reversed (backward) process $\bigl(\cev{\vx}_t\bigr)_{t\in[0,T]}$, with $\cev{*}_t:=*_{T-t}$, is again a CTMC with law $\cev{\vp}_s$ and generator $\cev{\mQ}_t$ satisfying~\citep{kelly2011reversibility}
$  \frac{\mathrm{d}\,\cev{\vp}_t}{\mathrm{d}t}=\cev{\mQ}_t\,\cev{\vp}_t,$
where for $x\neq y$,
$  \cev \mQ_t(y,x)=\frac{\cev \vp_t(y)}{\cev \vp_t(x)}\,\cev \mQ_t(x,y),$ and $
  \cev \mQ_t(x,x)=-\sum_{y'\neq x}\cev \mQ_t(y',x),$
and $\cev{\mQ}_t:=\mQ_{T-t}$.

According to Proposition 3.2 in~\citep{rendiscrete}, discrete diffusion models can also be interpreted as stochastic integrals with Poisson random measure.
%
%
The forward process in discrete diffusion models can thus be represented by the following stochastic integral:
$	\vx_t = \vx_0 + \int_0^t \int_{\sD} \nu  N[\lambda](\mathrm{d} t, \mathrm{d} \nu),$
where the intensity $\lambda$ is defined as
$
	\lambda_t(\nu, \omega) = \cev \mQ_t(x_{t^-}(\omega) + \nu, x_{t^-}(\omega))
$
if $x_{t^-}(\omega) + \nu \in \mathcal{X}$ and 0 otherwise. Here, the outcome $\omega \in \Omega$ and $x_{t^-}$ denotes the left limit of the c\`adl\`ag process $x_t$ at time $t$ with $x_{0^-} = x_0$. We will also omit the variable $\omega$, should it be clear from context.

The backward process in discrete diffusion models can also be represented similarly as
\begin{equation}
	\vy_t = \vy_0 + \int_0^t \int_{\sD} \nu  N[\mu](\mathrm{d} s, \mathrm{d} \nu),
	\label{eq:backward_integral_nu}
\end{equation}
where the intensity $\mu$ is defined as $\mu_t(\nu, \omega) = \cev{s}_t(\vy_{t^-}, \vy_{t^-} + \nu) \cev \mQ_s(\vy_{s^-}, \vy_{s^-} + \nu)$. During inference, $\widehat \mu$ defined by replacing the true score $\vs_t$ with the neural network estimated score $\widehat \vs_t$ is used.

\subsection{Masking Diffusion}
\label{sec:masking-diffusion}

In this work, we focus on absorbing, or masking discrete diffusion on the clean state space
\(\mathcal X_0=[S]^d\) and the masked state space
\(\mathcal X_M=([S]\cup\{\mathrm{MASK}\})^d\).  The forward process independently replaces each clean coordinate by \(\mathrm{MASK}\) and keeps \(\mathrm{MASK}\) absorbing.  Let \(\beta_t\ge0\) be the forward masking rate (i.e. noise schedule) and
$\alpha_t
    :=
    \exp\!\left(-\int_0^t \beta_s\,ds\right)$.
Then, for a clean sample \(x_0\in[S]^d\), the forward marginal is $q_t(z\mid x_0)
    =
    \prod\nolimits_{i=1}^d
    \Big[
    \alpha_t\,\mathbf 1\{z_i=x_{0,i}\}
    +
    (1-\alpha_t)\,\mathbf 1\{z_i=\mathrm{MASK}\}
    \Big], z\in\mathcal X_M .$
Equivalently, the forward CTMC generator only allows transitions from ordinary tokens to \(\mathrm{MASK}\):
\[
    Q_t(z,z^{i\to \mathrm{MASK}})
    =
    \beta_t\,\mathbf 1\{z_i\neq \mathrm{MASK}\},
    \qquad
    Q_t(z,z)
    =
    -\beta_t\,|\{i:z_i\neq \mathrm{MASK}\}|.
\]
Here \(z^{i\to \mathrm{MASK}}\) denotes the state obtained by replacing coordinate \(i\) of \(z\) with \(\mathrm{MASK}\).

The reverse process has a particularly simple structure.  Let $O(z):=\{i:z_i\neq \mathrm{MASK}\}$
be the observed coordinates of a partially masked state \(z\).  For a masked coordinate \(i\notin O(z)\), define the clean-data posterior $\pi_i(c\mid z)
    :=
    \mathbb P_{X_0\sim p_{\mathrm{data}}}
    \!\left(
    X_{0,i}=c
    \,\middle|\,
    X_{0,O(z)}=z_{O(z)}
    \right), c\in[S].$
Because the masking channel treats all hidden token values identically, this posterior is independent of the corruption time \(t\).  The reverse rate from \(z\) to the state \(z^{i\leftarrow c}\), obtained by replacing \(\mathrm{MASK}\) at coordinate \(i\) with token \(c\), is
\[
    \mu_t(z^{i\leftarrow c},z)
    =
    \beta_t\frac{q_t(z^{i\leftarrow c})}{q_t(z)}
    =
    \frac{\beta_t\alpha_t}{1-\alpha_t}\,\pi_i(c\mid z)
    =
    a_t\,\pi_i(c\mid z),
    \qquad
    a_t:=\frac{\beta_t\alpha_t}{1-\alpha_t}.
\]
Thus the reverse rate separates into a scalar time factor \(a_t\) and a time-independent conditional token posterior.  A learned score model is used to approximate this posterior, giving rates of the form $\widehat\mu_t(z^{i\leftarrow c},z)=a_t\,\widehat p^\theta_i(c\mid z)$ and $
    \sum\nolimits_{c\in[S]}\widehat p^\theta_i(c\mid z)=1$.
Finally, masking diffusion has a first-hitting structure.  Along reverse sampling, a coordinate remains masked until its first reveal time \(\tau_i\in[\eta,T]\), and then stays fixed as $X_i(t)=\mathrm{MASK}$ for $t>\tau_i$ and $X_i(t)=c_i$ for $t\le \tau_i$.
This absorbing first-hitting property distinguishes masking diffusion from uniform or random-walk discrete diffusion, where coordinates may jump repeatedly, and is the structural property exploited by our Picard sampler.

\section{Parallel Sampling for Absorbing Discrete Diffusion Models: Algorithm~\ref{predictor}}
\label{sec:parallel-sampling}
In this section, we introduce the parallel-in-time $\tau$-leaping algorithm for absorbing discrete diffusion models. The key idea is to extend the Picard iteration from continuous diffusion models to the stochastic integral form of $\tau$-leaping in discrete state spaces.

\begin{algorithm}[ht]
\caption{Parallel $\tau$-Leaping Algorithm for Discrete Diffusion Model Sampling}
\label{predictor}
\begin{algorithmic}[1]
\REQUIRE $\widehat y_{t_0}\sim q_T$, large time grid $(t_n)_{n\in[0,N]}$ with $t_0=T$, $t_N=\eta$, and $t_{n+1}<t_n$; small time grid $(\tau_{n,m})_{n\in[0,N-1],m\in[0,M]}$ with $\tau_{n,0}=t_n$, $\tau_{n,M}=t_{n+1}$ and positive step size $\epsilon=t_n-t_{n+1}$ divided by $M$, i.e. $\tau_{n,m}=t_n-m\epsilon$;
Picard depth $K_p$; intensity $\widehat\mu_s^\theta$; pre-sampled random seeds $(\xi_n)_{n\in[0,N-1]}$; First-hitting truncation operation $\mathrm{FHT}(\cdot)$ .
\ENSURE An early-stopped sample $\widehat y_{t_N}$ with $t_N=\eta$.

\FOR{$n=0$ to $N-1$}
    \STATE \textbf{Initial Guess:} $\widehat{y}^{(0)}_{\tau_{n,m}}\gets\widehat{y}_{t_n} $ \text{for all} $m\in[0,M]$\\
    \FOR{$k=0$ to $K_p-1$}
        \FOR{$j=0$ to $M-1$ \textbf{in parallel}}
            \STATE $\widehat{\mu}^{\theta}_{j}\gets \widehat{\mu}^{\theta}_{\tau_{n,j}}\big( \cdot|\widehat{y}^{(k)}_{\tau_{n,j}}\big)$
            \STATE $J_{j,y'}\sim \mathcal{P}(\widehat{\mu}^{\theta}_{j}(y')\cdot\epsilon;\xi_n)$ \text{for all} $y'\in\mathcal{X}_{neighbor}$
            \STATE $\Delta\widehat{y}^{(k)}_j=\sum_{y'\in\mathcal{X}_{neighbor}}\big(y'-\widehat{y}^{(k)}_{\tau_{n,j}}\big)\cdot J_{j,y'}$
        \ENDFOR
        \STATE \textbf{First-Hitting Truncation:} $(\Delta\widehat y^{(k)}_0,\ldots,\Delta\widehat y^{(k)}_{M-1})\gets\mathrm{FHT}\!\left(\Delta\widehat y^{(k)}_0,\ldots,\Delta\widehat y^{(k)}_{M-1}\right)$
        \FOR{$m=1$ to $M$ \textbf{in parallel}}
            \STATE $\widehat{y}^{(k+1)}_{\tau_{n,m}}\gets\widehat{y}_{t_n}+\sum^{m-1}_{j=0}\Delta\widehat{y}^{(k)}_{j}$
        \ENDFOR
    \ENDFOR
    \STATE $\widehat{y}_{t_{n+1}}\gets \widehat{y}^{(K_p)}_{\tau_{n,M}}$
\ENDFOR
\end{algorithmic}
\end{algorithm}

\paragraph{Discretization Scheme.}
Following the time convention above, the reverse sampler is executed on the forward-noise interval \([\eta,T]\) in decreasing time order. We use a large noise-time grid
$T=t_0>t_1>\cdots>t_N=\eta$.
For block \(n\), the interval \([t_{n+1},t_n]\) is divided into \(M\) fine cells using $\tau_{n,m}=t_n-m\epsilon_n$,
$\epsilon_n=\frac{t_n-t_{n+1}}{M},m=0,\ldots,M$.
The positive number \(\epsilon_n\) is the fine-step width used in the Poisson proposal mean, and can be set as $\epsilon_n=\epsilon$ under constant width schedule.

\paragraph{Picard update.} When describing the standard $\tau$-leaping algorithm with the form in \eqref{eq:backward_integral_nu}, the main update step in Picard iteration can be described as
\begin{equation}
\begin{split}
\label{eqn:picard_update}
\widehat{y}^{(k+1)}_{\tau_{n,m}}
&=
\widehat{y}_{t_n}
+
\sum\nolimits_{j=0}^{m-1}
\bigg(
\sum\nolimits_{y'\in\mathbb{X}}
\big(
y'-\widehat{y}^{(k)}_{\tau_{n,j}}
\big)
\cdot
\mathcal{P}
\big(
\widehat{\mu}^{\theta}_{\tau_{n,j}}
\big(
y'\mid \widehat{y}^{(k)}_{\tau_{n,j}};\xi_n
\big)
\cdot \epsilon_n
\big)
\bigg).
\end{split}
\end{equation}
where $\widehat{y}^{(k)}_{\tau_{n,j}}$ is the sample state a $t=t_n-j\epsilon$ in the $k$-th Picard iteration. $\mathcal{P}(\widehat{\mu}^{\theta}_{\tau_{n,j}}(y'|\widehat{y}^{(k)}_{\tau_{n,j}};\xi_n)$ denotes the number of jumps from $\widehat{y}^{(k)}_{\tau_{n,j}}$ to $y'$ during the small time interval $\epsilon$. In the block-wise parallel algorithm, the computation for each block starts from its initial state $\widehat{y}_{t_n}$ which is also the terminal value of the last block. Check Lines 4--7 in Algorithm~\ref{predictor} for details. The sampler admits a coordinate-token factorization \citep{campbell2022continuous}: each fine cell proposes token updates over \(d\) coordinates and \(S\) vocabulary entries, giving \(O(dS)\) local proposal cost rather than \(S^d\) dependence.  The Picard update then replaces the serial dependence on the previous fine step by dependence on the full trajectory from the previous iteration, and all proposals inside a block can be computed in parallel. After applying first-hitting truncation, the next block trajectory is reconstructed from the block start state using a parallel prefix sum over the truncated jump vectors. With pre-sampled shared random seeds $\xi_n$, the Poisson sampling performs as a deterministic map for convergence in each block. Since multi-time jump at the same time on one coordinate is not well-defined, such case is also truncated~\citep{liang2025discrete}.
\paragraph{First-hitting truncation.}
First-hitting truncation is designed for absorbing settings in Algorithm~\ref{alg:fht} (check Appendix~\ref{appd:FHS_alg} for details). Given the local proposal events in a block prefix before the trajectory reconstruction, the first-hitting operation returns the state obtained by applying, for each coordinate, only its earliest proposed token.

\section{Theoretical Guarantees}
\label{sec:theory}
In this section, we provide a proof of the Picard Convergence and an approximated error bound of our algorithm under specific definition and assumptions.



\subsection{Assumptions}
\label{subsec:assumptions}
We now introduce the definitions and assumptions used for the proof of theoretical guarantees. Definition~\ref{def:fine-cell} and~\ref{def:block-masses} provide necessary measure for evaluating the random proposal and error propagation. Assumption~\ref{ass:score-estimation-error} and~\ref{ass:bounded-score-estimate} are standard score controls inherited from the absorbing $\tau$-leaping convergence theory in \citep{liangabsorb} which allow us to isolate the additional error introduced by parallelizing the sequential sampler. Assumption~\ref{ass:event-switching} is a Dobrushin-style perturbation controlling condition, which can be considered as an analogue to the Lipschitz condition in continuous diffusion.

\begin{assumption}[\bf Score Estimation Error]
\label{ass:score-estimation-error}
Let \(t_0=T>t_1>\cdots>t_{N_{\mathrm{fine}}}=\eta\) be the serial fine grid. The estimated score satisfies
$    \sum_{\ell=0}^{N_{\mathrm{fine}}-1}
    (t_\ell-t_{\ell+1})
    \mathcal L_{\mathrm{SE}}(\widehat s_{t_\ell})
    \le
    \varepsilon^2_{\mathrm{score}}.$
\end{assumption}

\begin{assumption}[\bf Bounded Score Estimate]
\label{ass:bounded-score-estimate}
There exists $M_{\mathrm{score}}>0$ such that for all \(x,y\in[S]^d\) with \(Q_{t_\ell}(y,x)>0\),
$    |\log \widehat s_{t_\ell}(y,x)|
    \le \log M_{\mathrm{score}}.$
\end{assumption}

\begin{definition}[\bf Fine cells and local proposal events]
\label{def:fine-cell}
Let $\mathcal Q_d$ be the set of fine-grid cells covering the early-stopped interval $[\eta,T]$. For each $q\in\mathcal Q_d$, let $\Delta_q$ be its fine-step width, $t_q$ its representative time, and $a_q=a(t_q)$ be the total unmasking rate of one active masked coordinate. Define
\[
\lambda_q=\Delta_q a_q,
\qquad
\rho_q=\lambda_q e^{-\lambda_q}\le \lambda_q=\Delta_q a_q
\]
Here $\rho_q$ is the one-proposal probability at one active coordinate in cell $q$.

For an input state $z$ and cell randomness $\omega_q$, let
$C_q(z,\omega_q)\subseteq[d]\times[S]$
be the set of valid local proposal events generated in cell $q$. An event $(i,c)\in C_q(z,\omega_q)$ means that coordinate $i$ is locally proposed as token $c$.
\end{definition}

\begin{assumption}[\bf Normalized event switching]
\label{ass:event-switching}
There exists a constant $L_\star>0$ such that, for every fine cell $q$ and all relevant states $z,z'\in\mathcal X_M$,
\[
\mathbb E_{\omega_q}\left[\left|C_q(z,\omega_q)\triangle C_q(z',\omega_q)\right|\right]
\le
L_\star\rho_q d_H(z,z').
\]
\end{assumption}
This assumption works as a model-sensitivity condition in the Picard proof. The factor $\rho_q$ captures how likely cell $q$ is to produce a local proposal at all, and $L_\star$ controls how strongly a token-level perturbation of the input context can change the local proposal event set, whose constant property is also verified in Appendix~\ref{exp:assumption_test} and Figure~\ref{fig:L_star}.
\paragraph{Connection to score sensitivity.}
The event-switching condition can be viewed as a proposal-level analogue of a score Lipschitz condition. 
For masking diffusion, the reverse rate factorizes as $\widehat\mu_{\theta}(z)=a_q\widehat p_i^\theta(c\mid z)$.
If the normalized token posterior satisfies
\[
\sum\nolimits_i
\left\|
\mathbf 1\{z_i=\mathrm{MASK}\}\widehat p_i^\theta(\cdot\mid z)
-
\mathbf 1\{z'_i=\mathrm{MASK}\}\widehat p_i^\theta(\cdot\mid z')
\right\|_1
\le
L_{\mathrm{score}}d_H(z,z'),
\]
then, under the per-cell Poisson proposal coupling, the expected candidate-event switching obeys $\mathbb E_{\omega_q}
\left[
|C_q(z,\omega_q)\triangle C_q(z',\omega_q)|
\right]
\le
C\rho_qL_{\mathrm{score}}d_H(z,z')$, where $C$ is a constant depending on the way of coupling. Thus our normalized event-switching assumption is a direct algorithm-level consequence of a Dobrushin-type score sensitivity bound, with the proposal probability \(\rho_q\) separating temporal sparsity from model sensitivity. 

When most coordinates are still masked, the active posterior is high-entropy and a small perturbation of the revealed context can affect many remaining masked positions. Later in generation, most tokens have already been revealed, the active posterior becomes more confident, and both the number of active positions and the per-active-position posterior variation decrease. This predicts a decreasing \(L_{\mathrm{score}}\) profile along the generation trajectory, which supports the view that the normalized posterior map becomes more stable as the context accumulates. Such dynamics is also applicable to \(L_{\star}\). We verified the relation between $L_{\mathrm{score}}$ and the mask proportion during the generation process in Appendix~\ref{exp:L_score}.  One may also refer to Appendix~\ref{app:further_assump} for further discussion about the connection between this assumption and parallel-in-token properties of $\tau$-leaping. 

\begin{definition}[\bf Propagation masses]
\label{def:block-masses}
For each fine cell, set $b_q=L_\star\rho_q$. If block $n$ contains the fine cells $\mathcal B_n$, define the block and global propagation mass by
\[
B_n=\sum\nolimits_{q\in\mathcal B_n}b_q, \quad G_d=\sum\nolimits_{q\in\mathcal Q_d}b_q=\sum\nolimits_{n=1}^N B_n.
\]

\end{definition}
 The quantity \(b_q\) measures how many local proposal events can change per unit Hamming perturbation, $B_n$ controls the Picard difficulty of block $n$, and $G_d$ controls the total propagation mass over the whole sampling interval. In particular, $B_n$ is relatively small (\textbf{usually $\boldsymbol{\leq 1}$}) because it counts realized proposal-event switching rather than all possible score changes: a perturbation affects the Picard error only when it changes a valid proposal event, whose probability is controlled by the one-proposal mass \(\rho_q\). We also expect \(B_n\) to increase as sampling approaches the clean endpoint. In masking diffusion, the reverse unmasking scalar rate grows near the endpoint, which increases \(\lambda_q=\Delta_q a_q\) and hence the proposal probability \(\rho_q=\lambda_q e^{-\lambda_q}\). Therefore the actual propagation mass \(B_n\) can grow in the final blocks, which is precisely the regime controlled by early stopping. We also verified such dynamics and scale of \(B_n\) under text generation tasks in Appendix~\ref{exp:B_n_test}.

\begin{remark}
Although the unweighted \(L_{\mathrm{score}}\) might be numerically large in the early high-noise region (around 40 to 50 in text generation tasks, see Appendix~\ref{exp:L_score}), the tiny reverse proposal probability based on the noise schedule still ensures that the actual event-switching mass \(b_q\) and hence \(B_n\) remain small for fast convergence of Picard iteration.
\end{remark}

\subsection{Picard Convergence}
\label{sec:picard-theory}
We first present the theorem for Picard convergence.
\begin{theorem}[\bf Picard convergence]
\label{thm:picard-endpoint_main}
For any fixed time gird $[t_n,t_{n+1}]$, let $B=\sum_{r=0}^{M-1}b_r$ and initialization error $E_0=\max\limits_{0\le m\le M}\mathbb E\left[d_H\left(\widehat{y}^{(1)}_{\tau_{n,m}},\widehat{y}^{(0)}_{\tau_{n,m}}\right)\right].$
Under Assumption~\ref{ass:event-switching}, 
for every $K\ge0$,
\[
\mathbb E\left[d_H\left(\widehat{y}^{(k)}_{\tau_{n,M}},\widehat{y}^{(\infty)}_{\tau_{n,M}}\right)\right]
\le
E_0 e^B\frac{B^K}{K!}.
\]
\end{theorem}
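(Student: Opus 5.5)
The plan is the standard Picard contraction argument for Volterra-type maps, with Hamming distance as the metric and Assumption~\ref{ass:event-switching} in place of a Lipschitz bound. Fix block $n$ and write $y^{(k)}_m:=\widehat y^{(k)}_{\tau_{n,m}}$. With the shared seed $\xi_n$, the map $\Phi$ sending the trajectory $(y^{(k)}_j)_{j<M}$ to $(y^{(k+1)}_m)_{m\le M}$ is deterministic. By construction $y^{(k+1)}_m$ depends only on $y^{(k)}_0,\dots,y^{(k)}_{m-1}$, so $\Phi$ is strictly causal in time. The first-hitting truncation makes $y^{(k+1)}_m$ equal to $\widehat y_{t_n}$ with each coordinate overwritten by its earliest proposed token among the event sets $C_j(y^{(k)}_j,\omega_j)$, $j<m$.

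\textbf{Step 1 (one-step stability).} I will show that for any two input trajectories $y,y'$,
\[
d_H\bigl(\Phi(y)_m,\Phi(y')_m\bigr)\le \sum_{j=0}^{m-1}\bigl|C_j(y_j,\omega_j)\triangle C_j(y'_j,\omega_j)\bigr|.
\]
The argument: a coordinate $i$ differs at time $m$ only if its earliest proposal before $m$ differs between the two runs, either in time or in token. In that case some event $(i,c)$ at some cell $j<m$ lies in exactly one of the two event sets. Charging coordinate $i$ to that event is injective, since each event names one coordinate. Taking $\mathbb E_{\omega_j}$ conditional on the inputs, and using that $\omega_j$ is independent of $y_j,y'_j$ (the inputs depend only on the other cells' randomness), Assumption~\ref{ass:event-switching} gives
\[
\mathbb E\, d_H\bigl(\Phi(y)_m,\Phi(y')_m\bigr)\le \sum_{j<m} b_j\,\mathbb E\, d_H(y_j,y'_j).
\]

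\textbf{Main obstacle.} The independence claim in Step 1 is where I expect the difficulty. Iterates with $k\ge1$ depend on every $\omega_j$ through earlier rounds, so $y^{(k)}_j$ is not independent of $\omega_j$ in general. My first fix is to exploit causality. By induction on $k$, $y^{(k)}_j$ is measurable with respect to $(\omega_0,\dots,\omega_{j-1})$, because $y^{(k)}_j$ uses only cells $<j$ evaluated at iterates that themselves use only earlier cells. That measurability gives the needed independence from $\omega_j$. If this failed, I would read the assumption as holding conditionally, uniformly over inputs.

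\textbf{Step 2 (iterated Volterra bound).} Set $e_k(m):=\mathbb E\, d_H(y^{(k+1)}_m,y^{(k)}_m)$, so that $e_0(m)\le E_0$. Step 1 gives $e_k(m)\le\sum_{j<m}b_j\,e_{k-1}(j)$. By induction on $k$,
\[
e_k(m)\le E_0\sum_{0\le j_1<\dots<j_k<m} b_{j_1}\cdots b_{j_k}\le E_0\,\frac{(\sum_{j<m}b_j)^k}{k!}\le E_0\frac{B^k}{k!},
\]
where the middle inequality is the elementary-symmetric-polynomial bound. Strict causality also shows the iteration stabilizes after at most $M$ rounds, so $y^{(\infty)}$ exists and is the sequential $\tau$-leaping trajectory.

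\textbf{Step 3 (telescoping).} By the triangle inequality for $d_H$,
\[
\mathbb E\, d_H\bigl(y^{(K)}_M,y^{(\infty)}_M\bigr)\le\sum_{k\ge K}e_k(M)\le E_0\sum_{k\ge K}\frac{B^k}{k!}\le E_0\frac{B^K}{K!}\sum_{l\ge0}\frac{B^l}{l!}=E_0e^B\frac{B^K}{K!},
\]
using $k!\ge K!\,(k-K)!$.
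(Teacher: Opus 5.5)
Your proposal is correct and follows the paper's proof essentially step for step. Your Step 1 is the paper's first-hitting stability lemma combined with its triangular recursion $e_{k+1}(m)\le\sum_{r<m}b_r e_k(r)$, your Step 2 is the same ordered-sum bound $S_K\le B^K/K!$, and your Step 3 is the same telescoping tail estimate via $K!\,j!\le(K+j)!$. You add two small points that tighten the argument without changing the route. First, your causal-measurability argument shows that $\omega_j$ is independent of $(y^{(k+1)}_j,y^{(k)}_j)$; the paper uses this only implicitly when it conditions on the inputs before applying Assumption~\ref{ass:event-switching}. Second, you observe that the iteration freezes after $M$ rounds, which identifies $y^{(\infty)}$ pathwise with the serial trajectory and replaces the paper's Cauchy-in-expectation argument.
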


We denote by \( \widehat{y}^{(\infty)}_{\tau_{n,M}}\) the fixed point of Picard iteration, which is exactly the sequential \(\tau\)-leaping trajectory over the same block, fine grid, and random source. Similar to the proof of Proposition 1 in \citep{shih2023parallel}, since the \(m\)-th state in each block depends only on the input states at earlier microsteps \(0,\ldots,m-1\), the Picard iteration is triangular in time. Therefore, the serial trajectory is a fixed point of the iteration, and by induction on the microstep index it is the unique fixed point. Meanwhile, since any two states can at most be different on $d$ coordinates, one may always take the initial error $E_0\le d$. Based on the experiments in Appendix~\ref{exp:B_n_test}, we verify that the propagation mass $B$ is clearly smaller than 1 except for the terminal phase of generation. 

The theorem shows that the iteration residual of discrete diffusion under Hamming distance performs exponential-factorial contraction compared with typical exponential contraction in continuous diffusion~\citep{chen2024accelerating}. 
Please refer to Appendix~\ref{appd:picard_converge} for the proof.

\begin{remark}
The exponential-factorial contraction in Theorem~\ref{thm:picard-endpoint_main} is not exclusive to the discrete diffusion. It is actually the discrete analogue of the classical Volterra expansion for Picard iterations in continuous case. Please refer to Appendix~\ref{appd:con_dis} for details.
\end{remark}

\begin{corollary}[\bf Uniform-block Picard NFE]
\label{cor:picard-nfe_main}
Assume $t_n-t_{n+1}=O(1)$ for any $n \in [N]$ and let $B_{\max}=\max_n B_n$.
Under Assumption~\ref{ass:event-switching},  Then the choice
\[
K_p=\left\lceil
\max\left\{2eB_{\max},\frac{1}{\log 2}\log\frac{Nd e^{B_{\max}}}{\varepsilon}\right\}
\right\rceil =O\left(\log (d\varepsilon^{-1})\right)
\]
ensures that $\TV(\Law(Y_N^{(K_p)}),\Law(Y_N^{(\infty)}))\leq \varepsilon$, where $\Law(\cdot)$ is the law of variables. 
\end{corollary}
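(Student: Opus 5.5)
We propose to prove Corollary~\ref{cor:picard-nfe_main} in three steps. First we reduce the total-variation distance to expected Hamming errors at block endpoints via a coupling. Then we apply Theorem~\ref{thm:picard-endpoint_main} block by block. Finally we bound the factorial tail using the choice of $K_p$.

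\textbf{Step 1 (coupling to TV).} Run the Picard sampler and the sequential (fixed-point) sampler on the same probability space, with the same initial state $\widehat y_{t_0}\sim q_T$ and the same pre-sampled seeds $(\xi_n)$. Then
\[
\TV\bigl(\Law(Y_N^{(K_p)}),\Law(Y_N^{(\infty)})\bigr)\le \mathbb P\bigl(Y_N^{(K_p)}\neq Y_N^{(\infty)}\bigr).
\]
The two runs first disagree at some block $n$; before it, both runs start block $n$ from the same state. Taking a union bound over the first block where the endpoints differ gives
\[
\mathbb P\bigl(Y_N^{(K_p)}\neq Y_N^{(\infty)}\bigr)\le \sum_{n=0}^{N-1}\mathbb P\bigl(\widehat y^{(K_p)}_{\tau_{n,M}}\neq \widehat y^{(\infty)}_{\tau_{n,M}}\bigr),
\]
where, in block $n$, both iterations start from a common state. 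The subtle point is that after a discrepancy the two chains evolve from different starts. The union bound over "first discrepancy" events handles this, since on the complement of all earlier events the starts coincide. We condition on $\widehat y_{t_n}$, using that the block seed $\xi_n$ is independent of the past. Because $d_H$ takes integer values, Markov's inequality gives $\mathbb P(\cdot\neq\cdot)\le \mathbb E[d_H(\cdot,\cdot)]$.

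\textbf{Step 2 (apply Theorem~\ref{thm:picard-endpoint_main}).} For each block, the theorem gives
\[
\mathbb E\, d_H\le E_0 e^{B_n}\frac{B_n^{K_p}}{K_p!}\le d\,e^{B_{\max}}\frac{B_{\max}^{K_p}}{K_p!},
\]
using $E_0\le d$ and monotonicity in $B$. Summing over the $N$ blocks yields the total bound $N d e^{B_{\max}}B_{\max}^{K_p}/K_p!$.

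\textbf{Step 3 (choice of $K_p$).} We use $K!\ge (K/e)^K$, so
\[
\frac{B^{K}}{K!}\le\left(\frac{eB}{K}\right)^K\le 2^{-K}
\quad\text{whenever } K\ge 2eB_{\max}.
\]
The second condition, $K\ge \log_2\bigl(Nde^{B_{\max}}/\varepsilon\bigr)$, then makes the total bound at most $\varepsilon$. For the order claim, $t_n-t_{n+1}=O(1)$ on $[\eta,T]$ gives $N=O(T)$, treated as a constant or logarithmic term. Also $B_{\max}=O(1)$, since it is bounded by the total mass. Hence $K_p=O(\log(d/\varepsilon))$.

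We expect the main obstacle to be Step 1: justifying the per-block reduction rigorously when earlier blocks may already have diverged. We would handle it by the first-discrepancy decomposition described above.
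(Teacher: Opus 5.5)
Your proposal follows the paper's proof essentially step for step: a same-seed coupling giving $\TV\le\mathbb P(\neq)\le\mathbb E[d_H]$, a first-disagreement union bound over blocks, the bounds $E_{n,0}\le d$ and $B_n\le B_{\max}$, and then $K!\ge(K/e)^K$ with the $2eB_{\max}$ threshold supplying the factor $\log 2$. The one slip is in the order claim: $B_{\max}$ is not $O(1)$ "because it is bounded by the total mass," since that total mass is $G_d=O(\log(d/\varepsilon))$ by Lemma~\ref{lem:Gd-bound}, and the final constant-width block near $\eta$ can itself carry mass of that order; but using $B_{\max}\le G_d$ as the paper does keeps both $2eB_{\max}$ and the $B_{\max}$ inside the logarithm at $O(\log(d/\varepsilon))$, so $K_p=O(\log(d/\varepsilon))$ still follows.
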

Here, we prove that an $\varepsilon$-level block endpoint TV error can be achieved with $O(\log^2 d)$ total NFE under \emph{constant level physical block width}, which implies the true block-wise parallel sampling. Such result is also independent to the fine-grid partition and discretization error.
Please refer to Appendix~\ref{appd:picard_nfe} for the proof.

\subsection{Total Complexity of Algorithm~\ref{predictor}}
\label{sec:global-theory}

We now connect the Picard endpoint theorem to a sequential convergence theorem for absorbing discrete diffusion.  We take the absorbing $\tau$-leaping sampler of \citep{liangabsorb} as the serial reference that our Picard block iteration parallelizes.

Let $\mu_0$ be the clean data distribution, $\mu_\eta^\star$ the exact absorbing reverse law at early stopping time $\eta$, and $A_\eta$ the deterministic completion map used after early stopping.  Let $\nu_{\mathrm{seq}}^{\theta,\Delta}$ be the output law of the absorbing serial $\tau$-leaping sampler on the fine grid, and let $\nu_{\mathrm{pic}}^{\theta,\Delta,K_p}$ be the output law of the Picard sampler with depth $K_p$.  We define $E_{\mathrm{term}}:=\TV(\mu_0,A_\eta\#\mu_\eta^\star)$.

\paragraph{Convergence of sequential $\tau$-leaping (Theorem 2 in  \citep{liangabsorb}).}
The absorbing $\tau$-leaping theorem gives a KL bound of the form
\[
\mathrm{KL}(\mu_\eta^\star\|\nu_{\mathrm{seq}}^{\theta,\Delta})
\le E_{\mathrm{absTL}},
\]
where, up to universal and logarithmic constants,
\[
E_{\mathrm{absTL}}
:=
 d  e^{-T} \log S
 +\varepsilon_{\mathrm{score}}
 +
 \frac{dS\bigl(T+\log(M_{\mathrm{score}}\eta^{-1})\bigr)
          \bigl(T+\log \eta^{-1}\bigr)^2}
        {N_{\mathrm{fine}}}.
\]
Here $N_{\mathrm{fine}}$ is the number of serial fine steps, $\varepsilon_{\mathrm{score}}$ is the score entropy error, and $M_{\mathrm{score}}$ is the bounded-score constant.  The same result gives the early-stopping control $E_{\mathrm{term}}\lesssim d\eta$. The theorem gives a KL bound for the early-stopped serial sampler by decomposing the error into initialization, score-estimation, and time-discretization terms. 

The main technical feature of the result is that it exploits the special geometry of the absorbing rate matrix. Unlike symmetric or uniform-rate chains, the absorbing rate matrix is highly asymmetric, so standard log-Sobolev mixing arguments are not directly applicable. Instead, the analysis uses the explicit form of the absorbing transition kernel and obtains score bounds only along valid absorbing transitions, namely those pairs \((x,y)\) with \(Q(y,x)>0\). This avoids taking a uniform worst-case bound over all pairs of distinct states, which would be overly pessimistic for absorbing dynamics. 
As a result, the serial absorbing \(\tau\)-leaping sampler achieves a dimension-linear step complexity, up to logarithmic and accuracy factors, for approximating the early-stopped target distribution. 

In our analysis, this theorem is used only as the serial discretization-and-score reference; the additional term we control is the Picard parallelization error between this serial sampler and our blockwise parallel sampler.

\begin{proposition}[\bf Total Variation error with absorbing $\tau$-leaping reference]
\label{prop:total-tv_main}
Assume the absorbing serial $\tau$-leaping bound above and the Picard endpoint bound in Theorem~\ref{thm:picard-endpoint_main}.  Then
\[
\begin{aligned}
\TV(\mu_0,A_\eta\#\nu_{\mathrm{pic}}^{\theta,\Delta,K_p})
\le
E_{\mathrm{term}}+\sqrt{\frac12E_{\mathrm{absTL}}}+
\sum_{n=1}^N E_{n,0}e^{B_n}\frac{B_n^{K_p}}{K_p!}.
\end{aligned}
\]
\end{proposition}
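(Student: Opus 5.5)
The plan is to split the total error with the triangle inequality for total variation, using the serial sampler $\nu_{\mathrm{seq}}^{\theta,\Delta}$ and the exact early-stopped law $\mu_\eta^\star$ as intermediate laws. Concretely,
\[
\TV(\mu_0,A_\eta\#\nu_{\mathrm{pic}})
\le
\TV(\mu_0,A_\eta\#\mu_\eta^\star)
+\TV(A_\eta\#\mu_\eta^\star,A_\eta\#\nu_{\mathrm{seq}})
+\TV(A_\eta\#\nu_{\mathrm{seq}},A_\eta\#\nu_{\mathrm{pic}}).
\]
The first term is $E_{\mathrm{term}}$ by definition. For the second and third terms, I will use the data-processing inequality for pushforwards by the deterministic map $A_\eta$, which lets me drop $A_\eta$. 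The second term is then bounded by Pinsker's inequality combined with the absorbing $\tau$-leaping bound $\mathrm{KL}(\mu_\eta^\star\|\nu_{\mathrm{seq}})\le E_{\mathrm{absTL}}$, which gives $\sqrt{\tfrac12 E_{\mathrm{absTL}}}$.

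The remaining term is the Picard error $\TV(\nu_{\mathrm{seq}},\nu_{\mathrm{pic}})$. I will bound it by coupling the two samplers on a common probability space, using the same initial draw $\widehat y_{t_0}\sim q_T$ and the same pre-sampled seeds $(\xi_n)$. The statement after Theorem~\ref{thm:picard-endpoint_main} says the Picard fixed point $\widehat y^{(\infty)}$ of each block coincides with the serial $\tau$-leaping trajectory over that block, given the same start and randomness. Under the coupling, the coupling inequality then gives
\[
\TV(\nu_{\mathrm{seq}},\nu_{\mathrm{pic}})\le\mathbb P(Y^{\mathrm{seq}}_N\neq Y^{\mathrm{pic}}_N).
\]

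To control this probability, I will use a hybrid or telescoping argument over the blocks. Let $H_n$ be the hybrid process that runs Picard with depth $K_p$ on blocks $1,\dots,n$ and then runs the exact serial (fixed-point) dynamics on blocks $n+1,\dots,N$. Then $H_0$ is the serial sampler and $H_N$ is the Picard sampler, and
\[
\mathbb P(H_0\neq H_N)\le\sum_{n=1}^N\mathbb P(H_{n-1}\neq H_n).
\]
The processes $H_{n-1}$ and $H_n$ share the same state at $t_{n-1}$ and use the same downstream randomness. So, conditionally on that start state, they can differ at the end only if the block-$n$ endpoints $\widehat y^{(K_p)}_{\tau_{n,M}}$ and $\widehat y^{(\infty)}_{\tau_{n,M}}$ differ. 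Because $d_H$ is integer valued, Markov's inequality gives $\mathbb P(X\neq Y)\le\mathbb E[d_H(X,Y)]$. Applying Theorem~\ref{thm:picard-endpoint_main} to block $n$ with $B=B_n$ and initialization error $E_{n,0}$ then bounds each summand by $E_{n,0}e^{B_n}B_n^{K_p}/K_p!$.

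I expect the main obstacle to be justifying this per-block step rigorously. Theorem~\ref{thm:picard-endpoint_main} is stated for a fixed block, and its expectation is over that block's randomness $\xi_n$ and its (random) start state. I have to make sure that $E_{n,0}$ is interpreted as the expectation taken along the hybrid process, that is, with the block started from the Picard-produced state. I also have to check that the downstream serial map is a deterministic function of the start state and the independent seeds, so that the two hybrids agree whenever the block-$n$ endpoints agree. With the seeds $\xi_n$ independent across blocks and of $\widehat y_{t_0}$, this follows by conditioning on $\mathcal F_{n-1}$ and applying the theorem conditionally. Summing the three pieces then yields the stated bound.
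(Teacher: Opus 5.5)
Your proposal is correct and follows the paper's proof: the same three-term triangle inequality through $A_\eta\#\mu_\eta^\star$ and $A_\eta\#\nu_{\mathrm{seq}}^{\theta,\Delta}$, data processing plus Pinsker for the serial term, and a shared-randomness coupling with $\mathbb P(X\neq Y)\le\mathbb E[d_H(X,Y)]$ and a union bound over blocks. Your hybrid telescoping is a cleaner formalization of the paper's ``first block where the coupled endpoints differ'' step.

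One small point on your last step. Do not condition on $\mathcal F_{n-1}$ and then re-average: that produces $\mathbb E[\max_m(\cdot)]$, which can be larger than $E_{n,0}=\max_m\mathbb E[\cdot]$. Instead, apply the theorem directly with the random block start. This is legitimate because the block-$n$ cell randomness is independent of that start and of earlier cells, which is exactly what the triangular recursion needs.
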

The proposition works as a bridge connecting our Picard iteration error with the standard sequential $\tau$-leaping reference and its original truncation/discretization error. The key idea is a direct application of the triangle and Pinsker's inequality for total variation. Please refer to Appendix~\ref{appd:total_tv} for the proof. 

\begin{theorem}[\bf Complexity of Picard $\tau$-leaping]
\label{thm:global-complexity-liang_main}
Let $\varepsilon_{\mathrm{tot}}\in \mathbb{R}_+$. Under Assumption~\ref{ass:score-estimation-error},~\ref{ass:bounded-score-estimate},~\ref{ass:event-switching}, 
the Algorithm~\ref{predictor} outputs a sample with 
\[\mathrm{TV}(\mu_0,A_\eta\#\nu_{\mathrm{pic}}^{\theta,\Delta,K_p}) \leq \varepsilon_{\mathrm{tot}}\]
within $N_{\mathrm{block}}K_p
=
O\Big(
\log\frac{d \log S}{\varepsilon_{\mathrm{tot}}^2}
\cdot
\log\frac{d}{\varepsilon_{\mathrm{tot}}}
\Big)$ iterations and using space $dM=O\Big(d\frac{N_{\mathrm{fine}}}{N_{\mathrm{block}}}\Big)$
by taking hyperparamters as 
\begin{gather*}
\eta=\Theta(\varepsilon_{\mathrm{tot}}/d),
\quad
T=O\left(\log (d\varepsilon_{\mathrm{tot}}^{-2}\log S )\right),
\quad
N_{\mathrm{fine}}=\widetilde O\left(dS{\varepsilon_{\mathrm{tot}}^{-2}}\right),
\\
N_{\mathrm{block}}
=
O\left(\log (d\varepsilon_{\mathrm{tot}}^{-2}\log S )\right),
\quad
K_p
=
O\left(\log (d\varepsilon_{\mathrm{tot}}^{-1})\right).
\end{gather*}

\end{theorem}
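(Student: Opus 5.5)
The plan is to start from Proposition~\ref{prop:total-tv_main} and make each of its three terms at most $\varepsilon_{\mathrm{tot}}/3$ by choosing the hyperparameters in order: $\eta$, then $T$, then $N_{\mathrm{fine}}$, then the block structure, and finally $K_p$.

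\textbf{Terminal term.} The early-stopping control of \citep{liangabsorb} gives $E_{\mathrm{term}}\lesssim d\eta$. Taking $\eta=\Theta(\varepsilon_{\mathrm{tot}}/d)$ makes this term at most $\varepsilon_{\mathrm{tot}}/3$.

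\textbf{Serial term.} We need $\sqrt{E_{\mathrm{absTL}}/2}\le\varepsilon_{\mathrm{tot}}/3$, i.e.\ $E_{\mathrm{absTL}}\lesssim\varepsilon_{\mathrm{tot}}^2$. We bound its three pieces separately.
\begin{itemize}
\item Initialization: $de^{-T}\log S\le\varepsilon_{\mathrm{tot}}^2/c$ forces $T=\Theta(\log(d\varepsilon_{\mathrm{tot}}^{-2}\log S))$.
\item Score: we treat $\varepsilon_{\mathrm{score}}\lesssim\varepsilon_{\mathrm{tot}}^2$ as part of Assumption~\ref{ass:score-estimation-error}.
\item Discretization: $dS(T+\log(M_{\mathrm{score}}/\eta))(T+\log\eta^{-1})^2/N_{\mathrm{fine}}\lesssim\varepsilon_{\mathrm{tot}}^2$ is satisfied by $N_{\mathrm{fine}}=\widetilde O(dS\varepsilon_{\mathrm{tot}}^{-2})$. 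The polylogarithms in $T$, $d/\varepsilon_{\mathrm{tot}}$ and $M_{\mathrm{score}}$ are absorbed into $\widetilde O$.
\end{itemize}

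\textbf{Block structure.} The blocks partition the same fine grid, so the serial reference is unchanged. The key choice is the block geometry. I would use uniform blocks of physical width $O(1)$ on $[\eta,T]$, giving $N_{\mathrm{block}}=\lceil (T-\eta)/c\rceil=O(\log(d\varepsilon_{\mathrm{tot}}^{-2}\log S))$. Then $M=N_{\mathrm{fine}}/N_{\mathrm{block}}$, and storing one state in $\mathcal X_M$ per fine cell of the current block costs space $dM$.

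\textbf{Picard term.} This is the step I expect to be the main obstacle: bounding $B_{\max}$ uniformly in $d$. We have $B_n\le L_\star\sum_{q\in\mathcal B_n}\Delta_q a_q\approx L_\star\int_{\mathrm{block}}a_t\,dt$. With $a_t=\beta_t\alpha_t/(1-\alpha_t)$ this integral equals $\log\frac{1-\alpha_{t_{n+1}}}{1-\alpha_{t_n}}$, which is $O(1)$ for $O(1)$-width blocks away from $t=0$. The final block near $\eta$ contributes about $\log(1/\eta)=O(\log(d/\varepsilon_{\mathrm{tot}}))$.
\begin{itemize}
\item First fix: use geometric blocks near the endpoint, each carrying $O(1)$ integrated rate. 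This adds only $O(\log(d/\varepsilon_{\mathrm{tot}}))$ blocks, which is within the stated $N_{\mathrm{block}}$ order since both are logarithmic.
\item Fallback: the bound $\rho_q=\lambda_q e^{-\lambda_q}\le\lambda_q$ together with the stated assumption $B_n\le 1$.
\end{itemize}

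With $B_{\max}=O(1)$, use $E_{n,0}\le d$ and $K!\ge (K/e)^K$. For $K_p\ge 2eB_{\max}$ we get $B^{K_p}/K_p!\le 2^{-K_p}$. Hence
\[
\sum_{n=1}^N E_{n,0}e^{B_n}\frac{B_n^{K_p}}{K_p!}\le N_{\mathrm{block}}\,d\,e^{B_{\max}}2^{-K_p},
\]
and this is at most $\varepsilon_{\mathrm{tot}}/3$ once $K_p\ge\log_2(3N_{\mathrm{block}}de^{B_{\max}}/\varepsilon_{\mathrm{tot}})$. This is exactly the choice in Corollary~\ref{cor:picard-nfe_main}, namely $K_p=O(\log(d\varepsilon_{\mathrm{tot}}^{-1}))$, since $\log N_{\mathrm{block}}$ is lower order.

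\textbf{Counting.} Each Picard round evaluates all $M$ microsteps in parallel, and the first-hitting truncation plus prefix sum adds no sequential NFE. The sequential iteration count is therefore $N_{\mathrm{block}}K_p=O\big(\log\frac{d\log S}{\varepsilon_{\mathrm{tot}}^2}\cdot\log\frac{d}{\varepsilon_{\mathrm{tot}}}\big)$, as claimed.
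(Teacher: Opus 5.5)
Your argument is correct, and its serial part (choosing $\eta$, $T$, $N_{\mathrm{fine}}$ so that $E_{\mathrm{term}}=O(\varepsilon_{\mathrm{tot}})$ and $E_{\mathrm{absTL}}=O(\varepsilon_{\mathrm{tot}}^2)$, with the score error taken of order $\varepsilon_{\mathrm{tot}}^2$) is exactly the paper's. The two routes differ only in the Picard term.

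The paper keeps constant physical block width and never makes $B_{\max}$ dimension-free. It uses the crude bound $B_{\max}\le G_d=O(\log(d/\varepsilon_{\mathrm{tot}}))$ from Lemma~\ref{lem:Gd-bound}. It then notes that the safe depth of Corollary~\ref{cor:picard-nfe_main} is still $O(\log(d/\varepsilon_{\mathrm{tot}}))$, because $B_{\max}$ enters only through the term $2eB_{\max}$ and additively (as $\log e^{B_{\max}}=B_{\max}$) inside the logarithm. So what you single out as the main obstacle is not one: a terminal block of mass $\Theta(\log(d/\varepsilon_{\mathrm{tot}}))$ costs only a constant factor in $K_p$.

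Your geometric subdivision near $\eta$ is nonetheless valid. It adds $O(\log(d/\varepsilon_{\mathrm{tot}}))$ blocks, within the stated order of $N_{\mathrm{block}}$. Since it refines the constant-width partition, the largest number of fine cells in a block, and hence the space, is no larger than under the paper's partition. What it buys is $B_{\max}=O(L_\star)$ and a depth driven only by $\log_2(N_{\mathrm{block}}d/\varepsilon_{\mathrm{tot}})$. The price is a non-uniform schedule, with a block-dependent $M_n$ in Algorithm~\ref{predictor}, instead of the paper's constant width.

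Two corrections. First, drop your fallback: ``$B_n\le 1$'' is only an empirical observation in the paper (the measured maximum is $1.26$), not an assumption you may invoke. Second, because $1-\alpha_t$ increases in $t$ and $t_{n+1}<t_n$, the block integral is $\int_{t_{n+1}}^{t_n}a_t\,\dd t=\log\frac{1-\alpha_{t_n}}{1-\alpha_{t_{n+1}}}$, the reciprocal of what you wrote.
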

Here, we propose the full parameter schedule for our Picard method under $\varepsilon_{\mathrm{tot}}$-level TV error, achieving ${O}(\log^2d)$ level time complexity and $\widetilde O\left({d^2S}\right)$ level space complexity. Please refer to Appendix~\ref{appd:global_comp} for proof details.  
\begin{remark}
The preceding theorem uses constant physical block width, which balances the critical path and memory.  If hardware limitations are ignored, one may instead place the entire early-stopped time horizon into a single Picard block as $N_{\mathrm{block}}=1$, which leads to even $O(\log d)$ level complexity.
\end{remark}
\begin{remark}
Here the time complexity is defined based on NFE. When considering the total workload, since the first-hitting truncation in Algorithm~\ref{appd:FHS_alg} requires $\log M$ steps of parallel prefix scan in each block, to final complexity would be $N_{\mathrm{block}}K_p\log M$. However, since $\log M=O(\log d)$, the total workload is still at $\mathrm{poly}\log(d)$ level.
\end{remark}

\section{Experiments}
\label{sec:exp}
In this section, we empirically evaluate the performance of our parallel $\tau$-leaping algorithm against sequential $\tau$-leaping and related acceleration methods.

\subsection{2D Toy Experiments}
We first conduct oracle experiments on sampling Chessboard and Circle 2D distributions under different Picard depth. The results in Table~\ref{tab:2Dtoy} show rapid convergence of the Picard Iteration with significant NFE and runtime acceleration. More detailed settings are demonstrated in Appendix ~\ref{appd:synthetic}.
\begin{table}[ht]
\centering
\captionsetup{skip=8pt}
\small
\caption{Performance comparison on synthetic data.}
\label{tab:2Dtoy}
\begin{tabular}{ll ccc ccc}
\toprule
\multirow{2}{*}{Alg.} & \multirow{2}{*}{$K_p$} & \multicolumn{3}{c}{\textbf{Chessboard}} & \multicolumn{3}{c}{\textbf{Circle}} \\
\cmidrule(lr){3-5} \cmidrule(lr){6-8}
& & Runtime (s) & KL Divergence$\downarrow$ & NFE & Runtime (s) & KL Divergence$\downarrow$ & NFE \\
\midrule
Seq. 
& / & 3.79 $\pm$ 0.08 & 0.0068 $\pm$ 0.0010 & 3200 & 4.77 $\pm$ 0.21 & 0.1160 $\pm$ 0.0094 & 3200 \\
\midrule
\multirow{6}{*}{Par.} 
& 2  & 0.12 $\pm$ 0.06 & 0.0491 $\pm$ 0.0029 & 80  & 0.31 $\pm$ 0.05 & 0.1335 $\pm$ 0.0060 & 80  \\
& 4  & 0.22 $\pm$ 0.07 & 0.0233 $\pm$ 0.0024 & 160  & 0.58 $\pm$ 0.04 & 0.1146 $\pm$ 0.0072 & 160  \\
& 6  & 0.31 $\pm$ 0.04 & 0.0093 $\pm$ 0.0017 & 240  & 0.85 $\pm$ 0.03 & 0.1118 $\pm$ 0.0073 & 240  \\
& 8  & 0.40 $\pm$ 0.06 & 0.0054 $\pm$ 0.0012 & 320  & 1.12 $\pm$ 0.04 & 0.1121 $\pm$ 0.0068 & 320  \\
& 10 & 0.53 $\pm$ 0.07 & 0.0056 $\pm$ 0.0014 & 400  & 1.37 $\pm$ 0.05 & 0.1119 $\pm$ 0.0059 & 400 \\
\bottomrule
\end{tabular} 
\end{table}

\subsection{Dimensional Scaling}

We next evaluate whether the Picard sampler exhibits favorable scaling with the sequence dimension in a controlled oracle setting.  We test whether a growing number of sequential Picard blocks is sufficient to match a serial fine-grid reference as $d$ increases. We use a block-product two-mode distribution on binary sequences. Each sample of length $d$ is partitioned into groups of size $g=8$, and each group independently follows $q_g(y)=\frac{1-\alpha}{2}\delta_{0^g}(y)+\frac{1-\alpha}{2}\delta_{1^g}(y)+\alpha 2^{-g}$ with $\alpha=0.05$.
\begin{table}[t]
\centering
\captionsetup{skip=8pt}
\caption{Oracle synthetic quality-matched scaling.}
\small
\label{tab:synthetic-scaling}
\begin{tabular}{lcccccccc}
\toprule
$d$ & Seq NFE & Blocks & $M$ & Para NFE & NFE speedup & Wall speedup & Seq KL & Para KL \\
\midrule
256 & 288 & 48 & 6 & 96 & 3.0$\times$ & 1.28$\times$ & 0.0389 & 0.0418 \\
512 & 540 & 54 & 10 & 108 & 5.0$\times$ & 2.06$\times$ & 0.0212 & 0.0227 \\
1024 & 1080 & 60 & 18 & 120 & 9.0$\times$ & 4.03$\times$ & 0.0141 & 0.0164 \\
2048 & 2048 & 64 & 32 & 128 & 16.0$\times$ & 8.73$\times$ & 0.0081 & 0.0097 \\
4096 & 4104 & 72 & 57 & 144 & 28.5$\times$ & 12.45$\times$ & 0.0045 & 0.0049 \\
\bottomrule
\end{tabular}
\end{table}

\begin{figure}[ht]
    \centering
    \begin{minipage}[t]{0.48\linewidth}
        \centering
        \includegraphics[width=\linewidth]{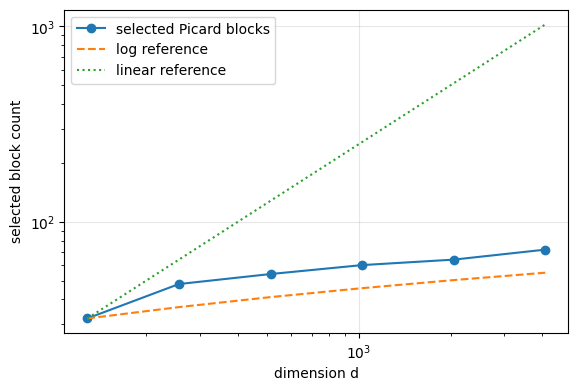}
        \centerline{\small (a) Selected Picard block count}
    \end{minipage}
    \hfill
    \begin{minipage}[t]{0.48\linewidth}
        \centering
        \includegraphics[width=\linewidth]{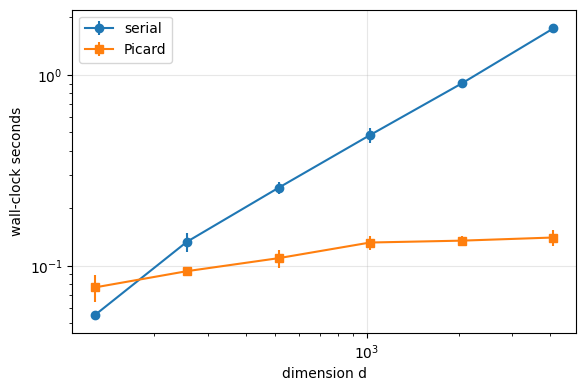}
        \centerline{\small (b) Serial vs. Picard wall-clock}
    \end{minipage}
    \caption{Visualization for the scaling experiment.}
    \label{fig:synthetic_scaling_small}
\end{figure}
Table~\ref{tab:synthetic-scaling} and Figure~\ref{fig:synthetic_scaling_small} show the quality-matched scaling behavior. As \(d\) increases, the serial reference requires a linearly growing fine grid, while the selected Picard schedule uses a much more slowly growing number of blocks, closely following the logarithmic reference. With fixed \(K_p=2\), this leads to a rapidly increasing critical-path NFE speedup, and the wall-clock measurements follow the same qualitative trend while maintaining near-serial KL quality. More visualization is in Appendix~\ref{appd:synthetic}.

\subsection{Image Generation}

The experiment utilized a MaskGiT-based score model \citep{besnier2023pytorch,chang2022maskgit} pretrained on ImageNet \citep{deng2009imagenet}. We compared the performance of both sequential and parallel $\tau$-leaping when generating $256\times256$ resolution images and evaluated the The Fréchet Inception Distance (FID) score based on 20k samples. We use classifier-free guidance with guidance scale $w$ = 3 to match the settings of baseline methods and verify the ability of combining our algorithm with inference-time control methods. The result in Table~\ref{tab:imagenet_data} shows that our parallel method achieves equivalent quality with half of the NFE. Check Appendix~\ref{appd:realworld} for more generated sample images.

\subsection{Text Generation}
\begin{wraptable}{r}{0.48\textwidth}
\centering
\captionsetup{skip=8pt}
\small
\caption{Comparison in image generation.}
\label{tab:imagenet_data}
\begin{tabular}{lccc}
\toprule
\textbf{Method}   & \textbf{NFE}  & \textbf{FID}$\downarrow$   \\ \midrule
FHS & 64  & 14.31 \\ 
Parallel Decoding & 64  & 13.07 \\ 
$\tau$-leaping & 64  & 7.56 \\ 
$\theta$-Trapezoidal & 64  & 6.59 \\
Parallel (Ours)   & 32  & 6.82 \\ 
\bottomrule
\end{tabular}
\end{wraptable}
The experiment utilized an RADD-based score model \citep{ou2024your} pretrained on the OpenWebText dataset \citep{Gokaslan2019OpenWeb} which has GPT-2-level text generation capabilities \citep{radford2019language}. We compared the performance of both sequential and parallel $\tau$-leaping when generating 1024-token texts with vocabulary $S = 50258$, and evaluated the average generative perplexity score with 1024 samples. The results in Table~\ref{tab:generative_perplexity_gpt2_large} show that our parallel method achieves better perplexity with fixed NFE, and actually achieve the same perplexity level with only half of the NFE compared with other sequential and high-order solvers. Please refer to Appendix~\ref{appd:realworld} for more comparison results with DDPD \citep{liu2024think}. 

\begin{table}[t]
\centering
\captionsetup{skip=8pt}
\small
\caption{Generative perplexity of texts generated by different sampling algorithms on GPT-2 large.}
\label{tab:generative_perplexity_gpt2_large}
\begin{tabular}{lccccccc}
\toprule
\textbf{Method}  & NFE = 32 & NFE = 64 & NFE = 128 & NFE = 256 & NFE = 512 & NFE = 1024 \\
\midrule
FHS &  $ 188.653$ & $ 140.420$ & $ 124.335$ & $ 111.959$ & $ 113.854$ & $ 110.946$ \\
Tweedie $\tau$-leaping & $ 160.466$ & $ 108.431$ & $ 83.922$ & $ 69.745$ & $ 53.922$ & $ 43.451$ \\
$\tau$-leaping & $ 94.918$ & $ 67.544$ & $ 52.384$ & $ 41.880$ & $ 35.498$ & $ 30.762$ \\
$\theta$-Trapezoidal & $ 87.895$ & $ 68.587$ & $ 50.226$ & $ 39.119$ & $ 32.886$ & $ 25.309$ \\
Parallel (Ours) & $ \boldsymbol{82.439}$ & $ \boldsymbol{50.858}$ & $ \boldsymbol{40.701}$ & $ \boldsymbol{32.510}$ & $ \boldsymbol{25.876}$ & $ \boldsymbol{22.099}$ \\
\bottomrule
\end{tabular}
\end{table}

\subsection{Runtime Acceleration}
\label{sec:runtime-acceleration}

In addition to quality metrics such as FID and PPL, we measure the actual sampling wall-clock time of our Picard sampler against the time-sequential $\tau$-leaping baseline. 
For a fair comparison, we match the quality-oriented sampling configuration and report pure sampling time, excluding the final image decoding overhead in the image experiment. 
The speedup is computed as the ratio between the serial runtime and the Picard runtime. The Picard parameters in both two tasks are set as $M=4,\ K_p=2$.

\begin{table}[ht]
\centering
\captionsetup{skip=8pt}
\caption{Runtime comparison between sequential $\tau$-leaping and Picard sampler. }
\label{tab:runtime-speedup}
\small
\setlength{\tabcolsep}{4.5pt}
\begin{tabular}{lcccccc}
\toprule
Task & Device & Serial NFE & Picard Block  & Picard NFE & Time (serial / Picard) & Speedup \\
\midrule
Text 
& H100 
& 128 
& $ N=32$ 
& 64 
& $1.652$s / $1.139$s 
& $1.45\times$ \\
Image 
& RTX 4090 
& 64 
& $ N=16$ 
& 32 
& $1.603$s / $0.860$s 
& $1.86\times$ \\
\bottomrule
\end{tabular}
\end{table}

Table~\ref{tab:runtime-speedup} indicates that the NFE reduction translates into practical single-GPU acceleration, although the wall-clock gain is smaller than the ideal NFE ratio due to relatively heavy memory traffic. Based on the research in continuous diffusion~\citep{shih2023parallel}, we expect better results in multi-GPU sampling.

\section{Discussion and Conclusion}

In this work, we proposed a parallel-in-time $\tau$-leaping sampler for absorbing discrete diffusion based on Picard iteration, achieving \(O(\log d)\) time complexity with TV-based error analysis and substantial improvement of NFE and runtime speed in experiments. Future work includes solving our method limitation by developing a sharper KL-level analysis of the Picard parallelization error and designing more memory-efficient variants, such as sliding-window Picard updates or adaptive block partitioning with multi-GPU experiments. 
Combining parallel-in-time sampling with improved serial solvers or sparse vocabulary computation may further improve scalability on large-scale discrete generation.

\begin{ack}
    Wei Huang was supported by JSPS KAKENHI (24K20848) and JST BOOST (JPMJBY24G6). Yu Yao was supported by JST SPRING (JPMJSP2108), and JST ASPIRE-MWI (JPMJAP2405). Huanjian Zhou was supported by Next Generation Artificial Intelligence Research Center, The University of Tokyo.
\end{ack}

\newpage

\bibliographystyle{alpha}
\bibliography{example_paper}

@article{gillespie2001approximate,
  title={Approximate accelerated stochastic simulation of chemically reacting systems},
  author={Gillespie, Daniel T},
  journal={The Journal of chemical physics},
  volume={115},
  number={4},
  pages={1716--1733},
  year={2001},
  publisher={AIP Publishing}
}

@article{liu2025parallelize,
  title={Parallelize single-site dynamics up to dobrushin criterion},
  author={Liu, Hongyang and Yin, Yitong},
  journal={Journal of the ACM},
  volume={72},
  number={1},
  pages={1--33},
  year={2025},
  publisher={ACM New York, NY}
}

@inproceedings{zhouparallel,
  title={Parallel Simulation for Log-concave Sampling and Score-based Diffusion Models},
  author={Zhou, Huanjian and Sugiyama, Masashi},
  booktitle={Forty-second International Conference on Machine Learning},
    year = {2025}
}

@article{chen2024accelerating,
  title={Accelerating diffusion models with parallel sampling: Inference at sub-linear time complexity},
  author={Chen, Haoxuan and Ren, Yinuo and Ying, Lexing and Rotskoff, Grant},
  journal={Advances in Neural Information Processing Systems},
  volume={37},
  pages={133661--133709},
  year={2024}
}

@article{goldreich2008computational,
  title={Computational complexity: a conceptual perspective},
  author={Goldreich, Oded},
  journal={ACM Sigact News},
  year={2008},
  publisher={ACM New York, NY, USA}
}

@article{chen2024convergence,
  title={Convergence analysis of discrete diffusion model: Exact implementation through uniformization},
  author={Chen, Hongrui and Ying, Lexing},
  journal={arXiv preprint arXiv:2402.08095},
  year={2024}
}

@inproceedings{cohen2023streaming,
  title={{Streaming Euclidean $k$-median and $k$-means with $o(\log n)$ Space}},
  author={Cohen-Addad, Vincent and Woodruff, David P and Zhou, Samson},
  booktitle={2023 IEEE 64th Annual Symposium on Foundations of Computer Science (FOCS)},
  year={2023},
  organization={IEEE}
}

@article{song2020denoising,
  title={Denoising diffusion implicit models},
  author={Song, Jiaming and Meng, Chenlin and Ermon, Stefano},
  journal={arXiv preprint arXiv:2010.02502},
  year={2020}
}

@article{anari2024fast,
  title={Fast parallel sampling under isoperimetry},
  author={Anari, Nima and Chewi, Sinho and Vuong, Thuy-Duong},
  journal={Proceedings of Thirty Seventh Conference on Learning Theory},
  pages = 	 {161--185},
  year = 	 {2024},
}

@book{kelly2011reversibility,
  title={Reversibility and stochastic networks},
  author={Kelly, Frank P},
  year={2011},
  publisher={Cambridge University Press}
}

@inproceedings{rendiscrete,
  title={How Discrete and Continuous Diffusion Meet: Comprehensive Analysis of Discrete Diffusion Models via a Stochastic Integral Framework},
  author={Ren, Yinuo and Chen, Haoxuan and Rotskoff, Grant M and Ying, Lexing},
  booktitle={The Thirteenth International Conference on Learning Representations},
  year={2025},
}

@article{campbell2022continuous,
  title={A continuous time framework for discrete denoising models},
  author={Campbell, Andrew and Benton, Joe and De Bortoli, Valentin and Rainforth, Thomas and Deligiannidis, George and Doucet, Arnaud},
  journal={Advances in Neural Information Processing Systems},
  volume={35},
  pages={28266--28279},
  year={2022}
}

@article{liang2025discrete,
  title={Discrete Diffusion Models: Novel Analysis and New Sampler Guarantees},
  author={Liang, Yuchen and Liang, Yingbin and Lai, Lifeng and Shroff, Ness},
  journal={arXiv preprint arXiv:2509.16756},
  year={2025}
}

@article{austin2021structured,
  title={Structured denoising diffusion models in discrete state-spaces},
  author={Austin, Jacob and Johnson, Daniel D and Ho, Jonathan and Tarlow, Daniel and Van Den Berg, Rianne},
  journal={Advances in neural information processing systems},
  volume={34},
  pages={17981--17993},
  year={2021}
}

@article{gruver2023protein,
  title={Protein design with guided discrete diffusion},
  author={Gruver, Nate and Stanton, Samuel and Frey, Nathan and Rudner, Tim GJ and Hotzel, Isidro and Lafrance-Vanasse, Julien and Rajpal, Arvind and Cho, Kyunghyun and Wilson, Andrew G},
  journal={Advances in neural information processing systems},
  volume={36},
  pages={12489--12517},
  year={2023}
}

@article{sarkar2024designing,
  title={Designing DNA With Tunable Regulatory Activity Using Score-Entropy Discrete Diffusion},
  author={Sarkar, Anirban and Kang, Yijie and Somia, Nirali and Mantilla, Pablo and Zhou, Jessica Lu and Nagai, Masayuki and Tang, Ziqi and Zhao, Chris and Koo, Peter},
  journal={bioRxiv},
  pages={2024--05},
  year={2024},
  publisher={Cold Spring Harbor Laboratory}
}

@article{zheng2023reparameterized,
  title={A reparameterized discrete diffusion model for text generation},
  author={Zheng, Lin and Yuan, Jianbo and Yu, Lei and Kong, Lingpeng},
  journal={arXiv preprint arXiv:2302.05737},
  year={2023}
}

@article{yang2023diffsound,
  title={Diffsound: Discrete diffusion model for text-to-sound generation},
  author={Yang, Dongchao and Yu, Jianwei and Wang, Helin and Wang, Wen and Weng, Chao and Zou, Yuexian and Yu, Dong},
  journal={IEEE/ACM Transactions on Audio, Speech, and Language Processing},
  volume={31},
  pages={1720--1733},
  year={2023},
  publisher={IEEE}
}

@inproceedings{lezama2022discrete,
  title={Discrete predictor-corrector diffusion models for image synthesis},
  author={Lezama, Jose and Salimans, Tim and Jiang, Lu and Chang, Huiwen and Ho, Jonathan and Essa, Irfan},
  booktitle={The Eleventh International Conference on Learning Representations},
  year={2022}
}

@article{wu2025fast,
  title={Fast-dllm: Training-free acceleration of diffusion llm by enabling kv cache and parallel decoding},
  author={Wu, Chengyue and Zhang, Hao and Xue, Shuchen and Liu, Zhijian and Diao, Shizhe and Zhu, Ligeng and Luo, Ping and Han, Song and Xie, Enze},
  journal={arXiv preprint arXiv:2505.22618},
  year={2025}
}

@article{xie2025variational,
  title={Variational Autoencoding Discrete Diffusion with Enhanced Dimensional Correlations Modeling},
  author={Xie, Tianyu and Xue, Shuchen and Feng, Zijin and Hu, Tianyang and Sun, Jiacheng and Li, Zhenguo and Zhang, Cheng},
  journal={arXiv preprint arXiv:2505.17384},
  year={2025}
}

@article{shih2023parallel,
  title={Parallel sampling of diffusion models},
  author={Shih, Andy and Belkhale, Suneel and Ermon, Stefano and Sadigh, Dorsa and Anari, Nima},
  journal={Advances in Neural Information Processing Systems},
  volume={36},
  pages={4263--4276},
  year={2023}
}

@article{chen2024fast,
  title={Fast sampling via discrete non-markov diffusion models with predetermined transition time},
  author={Chen, Zixiang and Yuan, Huizhuo and Li, Yongqian and Kou, Yiwen and Zhang, Junkai and Gu, Quanquan},
  journal={Advances in Neural Information Processing Systems},
  volume={37},
  pages={106870--106905},
  year={2024}
}

@article{sahoo2025diffusion,
  title={The diffusion duality},
  author={Sahoo, Subham Sekhar and Deschenaux, Justin and Gokaslan, Aaron and Wang, Guanghan and Chiu, Justin and Kuleshov, Volodymyr},
  journal={arXiv preprint arXiv:2506.10892},
  year={2025}
}

@article{ou2024your,
  title={Your absorbing discrete diffusion secretly models the conditional distributions of clean data},
  author={Ou, Jingyang and Nie, Shen and Xue, Kaiwen and Zhu, Fengqi and Sun, Jiacheng and Li, Zhenguo and Li, Chongxuan},
  journal={arXiv preprint arXiv:2406.03736},
  year={2024}
}

@article{zhang2025cosine,
  title={The Cosine Schedule is Fisher-Rao-Optimal for Masked Discrete Diffusion Models},
  author={Zhang, Leo},
  journal={arXiv preprint arXiv:2508.04884},
  year={2025}
}

@article{shaul2024flow,
  title={Flow matching with general discrete paths: A kinetic-optimal perspective},
  author={Shaul, Neta and Gat, Itai and Havasi, Marton and Severo, Daniel and Sriram, Anuroop and Holderrieth, Peter and Karrer, Brian and Lipman, Yaron and Chen, Ricky TQ},
  journal={arXiv preprint arXiv:2412.03487},
  year={2024}
}

@article{zhu2025di,
  title={DiMO: Distilling Masked Diffusion Models into One-step Generator},
  author={Zhu, Yuanzhi and Wang, Xi and Lathuiliere, Stephane and Kalogeiton, Vicky},
  journal={arXiv preprint arXiv:2503.15457},
  year={2025}
}

@article{yu2025parallelized,
  title={Parallelized midpoint randomization for langevin monte carlo},
  author={Yu, Lu and Dalalyan, Arnak},
  journal={Stochastic Processes and their Applications},
  pages={104764},
  year={2025},
  publisher={Elsevier}
}

@article{gupta2024faster,
  title={Faster diffusion-based sampling with randomized midpoints: Sequential and parallel},
  author={Gupta, Shivam and Cai, Linda and Chen, Sitan},
  journal={arXiv e-prints},
  pages={arXiv--2406},
  year={2024}
}

@article{zhao2024unified,
  title={Unified discrete diffusion for categorical data},
  author={Zhao, Lingxiao and Ding, Xueying and Yu, Lijun and Akoglu, Leman},
  journal={arXiv preprint arXiv:2402.03701},
  year={2024}
}

@article{chao2025beyond,
  title={Beyond Masked and Unmasked: Discrete Diffusion Models via Partial Masking},
  author={Chao, Chen-Hao and Sun, Wei-Fang and Liang, Hanwen and Lee, Chun-Yi and Krishnan, Rahul G},
  journal={arXiv preprint arXiv:2505.18495},
  year={2025}
}

@article{vignac2022digres,
  title={Digress: Discrete denoising diffusion for graph generation},
  author={Vignac, Clement and Krawczuk, Igor and Siraudin, Antoine and Wang, Bohan and Cevher, Volkan and Frossard, Pascal},
  journal={arXiv preprint arXiv:2209.14734},
  year={2022}
}

@inproceedings{gu2022vector,
  title={Vector quantized diffusion model for text-to-image synthesis},
  author={Gu, Shuyang and Chen, Dong and Bao, Jianmin and Wen, Fang and Zhang, Bo and Chen, Dongdong and Yuan, Lu and Guo, Baining},
  booktitle={Proceedings of the IEEE/CVF conference on computer vision and pattern recognition},
  pages={10696--10706},
  year={2022}
}

@article{wang2024fine,
  title={Fine-tuning discrete diffusion models via reward optimization with applications to dna and protein design},
  author={Wang, Chenyu and Uehara, Masatoshi and He, Yichun and Wang, Amy and Biancalani, Tommaso and Lal, Avantika and Jaakkola, Tommi and Levine, Sergey and Wang, Hanchen and Regev, Aviv},
  journal={arXiv preprint arXiv:2410.13643},
  year={2024}
}

@inproceedings{chang2022maskgit,
  title={Maskgit: Masked generative image transformer},
  author={Chang, Huiwen and Zhang, Han and Jiang, Lu and Liu, Ce and Freeman, William T},
  booktitle={Proceedings of the IEEE/CVF conference on computer vision and pattern recognition},
  pages={11315--11325},
  year={2022}
}

@article{besnier2023pytorch,
  title={A pytorch reproduction of masked generative image transformer},
  author={Besnier, Victor and Chen, Mickael},
  journal={arXiv preprint arXiv:2310.14400},
  year={2023}
}

@inproceedings{deng2009imagenet,
  title={Imagenet: A large-scale hierarchical image database},
  author={Deng, Jia and Dong, Wei and Socher, Richard and Li, Li-Jia and Li, Kai and Fei-Fei, Li},
  booktitle={2009 IEEE conference on computer vision and pattern recognition},
  pages={248--255},
  year={2009},
  organization={Ieee}
}

@article{radford2019language,
  title={Language models are unsupervised multitask learners},
  author={Radford, Alec and Wu, Jeffrey and Child, Rewon and Luan, David and Amodei, Dario and Sutskever, Ilya and others},
  journal={OpenAI blog},
  volume={1},
  number={8},
  pages={9},
  year={2019}
}

@misc{Gokaslan2019OpenWeb,  
	title={OpenWebText Corpus},
	author={Aaron Gokaslan and Vanya Cohen},
	howpublished={\url{http://Skylion007.github.io/OpenWebTextCorpus}}, 
	year={2019}
}

@article{ren2025fast,
  title={Fast solvers for discrete diffusion models: Theory and applications of high-order algorithms},
  author={Ren, Yinuo and Chen, Haoxuan and Zhu, Yuchen and Guo, Wei and Chen, Yongxin and Rotskoff, Grant M and Tao, Molei and Ying, Lexing},
  journal={arXiv preprint arXiv:2502.00234},
  year={2025}
}

@article{huang2025complexity,
  title={On the Complexity Theory of Masked Discrete Diffusion: From $\mathrm {poly}(1/\epsilon) $ to Nearly $\epsilon $-Free},
  author={Huang, Xunpeng and Lin, Yingyu and Jain, Nishant and Wang, Kaibo and Zou, Difan and Ma, Yian and Zhang, Tong},
  journal={arXiv preprint arXiv:2509.21835},
  year={2025}
}

@article{liang2026sharp,
  title={Sharp Convergence Rates for Masked Diffusion Models},
  author={Liang, Yuchen and Tan, Zhiheng and Shroff, Ness and Liang, Yingbin},
  journal={arXiv preprint arXiv:2602.22505},
  year={2026},
  url={https://arxiv.org/abs/2602.22505}
}

@article{conforti2025non,
  title={Non-Asymptotic Convergence of Discrete Diffusion Models: Masked and Random Walk dynamics},
  author={Conforti, Giovanni and Durmus, Alain and Pham, Le-Tuyet-Nhi and Raoul, Gael},
  journal={arXiv preprint arXiv:2512.00580},
  year={2025}
}

@inproceedings{liangabsorb,
  title={Absorb and Converge: Provable Convergence Guarantee for Absorbing Discrete Diffusion Models},
  author={Liang, Yuchen and Huang, Renxiang and Lai, Lifeng and Shroff, Ness and Liang, Yingbin},
  booktitle={The Thirty-ninth Annual Conference on Neural Information Processing Systems},
  year={2025}
}

@article{dmitriev2026efficient,
  title={Efficient sampling with discrete diffusion models: Sharp and adaptive guarantees},
  author={Dmitriev, Daniil and Huang, Zhihan and Wei, Yuting},
  journal={arXiv preprint arXiv:2602.15008},
  year={2026}
}

@article{zheng2024masked,
  title={Masked diffusion models are secretly time-agnostic masked models and exploit inaccurate categorical sampling},
  author={Zheng, Kaiwen and Chen, Yongxin and Mao, Hanzi and Liu, Ming-Yu and Zhu, Jun and Zhang, Qinsheng},
  journal={arXiv preprint arXiv:2409.02908},
  year={2024}
}

@article{yu2025dimple,
  title={Dimple: Discrete diffusion multimodal large language model with parallel decoding},
  author={Yu, Runpeng and Ma, Xinyin and Wang, Xinchao},
  journal={arXiv preprint arXiv:2505.16990},
  year={2025}
}

@article{ringel2026dependency,
  title={Dependency-Guided Parallel Decoding in Discrete Diffusion Language Models},
  author={Ringel, Liran and Ali, Ameen and Romano, Yaniv},
  journal={arXiv preprint arXiv:2604.02560},
  year={2026}
}

@article{liu2024think,
  title={Think while you generate: Discrete diffusion with planned denoising},
  author={Liu, Sulin and Nam, Juno and Campbell, Andrew and St{\"a}rk, Hannes and Xu, Yilun and Jaakkola, Tommi and G{\'o}mez-Bombarelli, Rafael},
  journal={arXiv preprint arXiv:2410.06264},
  year={2024}
}

@article{peng2025path,
  title={Path planning for masked diffusion model sampling},
  author={Peng, Fred Zhangzhi and Bezemek, Zachary and Patel, Sawan and Rector-Brooks, Jarrid and Yao, Sherwood and Bose, Avishek Joey and Tong, Alexander and Chatterjee, Pranam},
  journal={arXiv preprint arXiv:2502.03540},
  year={2025}
}

@article{razavi2019generating,
  title={Generating diverse high-fidelity images with vq-vae-2},
  author={Razavi, Ali and Van den Oord, Aaron and Vinyals, Oriol},
  journal={Advances in neural information processing systems},
  volume={32},
  year={2019}
}

@article{glauber1963time,
  title={Time-dependent statistics of the Ising model},
  author={Glauber, Roy J},
  journal={Journal of mathematical physics},
  volume={4},
  number={2},
  pages={294--307},
  year={1963},
  publisher={American Institute of Physics}
}

@article{kleinberg2002approximation,
  title={Approximation algorithms for classification problems with pairwise relationships: Metric labeling and Markov random fields},
  author={Kleinberg, Jon and Tardos, Eva},
  journal={Journal of the ACM (JACM)},
  volume={49},
  number={5},
  pages={616--639},
  year={2002},
  publisher={ACM New York, NY, USA}
}

@inproceedings{holenstein2007parallel,
  title={Parallel repetition: simplifications and the no-signaling case},
  author={Holenstein, Thomas},
  booktitle={Proceedings of the thirty-ninth annual ACM symposium on Theory of computing},
  pages={411--419},
  year={2007}
}

@article{bavarian2020optimality,
  title={Optimality of correlated sampling strategies},
  author={Bavarian, Mohammad and Ghazi, Badih and Haramaty, Elad and Kamath, Pritish and Rivest, Ronald L and Sudan, Madhu},
  journal={Theory of Computing},
  volume={16},
  number={1},
  pages={1--18},
  year={2020},
  publisher={Theory of Computing Exchange}
}

@article{LionsEtAl2001,
author = {Lions, Jacques-Louis and Maday, Yvon and Turinici, Gabriel},
year = {2001},
month = {01},
pages = {},
title = {A “parareal” in time discretization of PDE’s},
volume = {332},
journal = {Comptes Rendus de l’Académie des Sciences. Série I. Mathématique}
}

@article{gander2007analysis,
  title={Analysis of the parareal time-parallel time-integration method},
  author={Gander, Martin J and Vandewalle, Stefan},
  journal={SIAM Journal on Scientific Computing},
  volume={29},
  number={2},
  pages={556--578},
  year={2007},
  publisher={SIAM}
}

@article{emmett2012toward,
  title={Toward an efficient parallel in time method for partial differential equations},
  author={Emmett, Matthew and Minion, Michael},
  journal={Communications in Applied Mathematics and Computational Science},
  volume={7},
  number={1},
  pages={105--132},
  year={2012},
  publisher={Mathematical Sciences Publishers}
}

@article{gander2023unified,
  title={A unified analysis framework for iterative parallel-in-time algorithms},
  author={Gander, Martin J and Lunet, Thibaut and Ruprecht, Daniel and Speck, Robert},
  journal={SIAM Journal on Scientific Computing},
  volume={45},
  number={5},
  pages={A2275--A2303},
  year={2023},
  publisher={SIAM}
}

@inproceedings{tang2024accelerating,
  title={Accelerating parallel sampling of diffusion models},
  author={Tang, Zhiwei and Tang, Jiasheng and Luo, Hao and Wang, Fan and Chang, Tsung-Hui},
  booktitle={Forty-first International Conference on Machine Learning},
  year={2024}
}

@article{selvam2024self,
  title={Self-refining diffusion samplers: Enabling parallelization via parareal iterations},
  author={Selvam, Nikil R and Merchant, Amil and Ermon, Stefano},
  journal={Advances in Neural Information Processing Systems},
  volume={37},
  pages={5429--5453},
  year={2024}
}

@inproceedings{so2025pcm,
  title={PCM: Picard Consistency Model for Fast Parallel Sampling of Diffusion Models},
  author={So, Junhyuk and Shin, Jiwoong and Jang, Chaeyeon and Park, Eunhyeok},
  booktitle={Proceedings of the Computer Vision and Pattern Recognition Conference},
  pages={23313--23322},
  year={2025}
}

@article{lu2022dpm,
  title={Dpm-solver: A fast ode solver for diffusion probabilistic model sampling in around 10 steps},
  author={Lu, Cheng and Zhou, Yuhao and Bao, Fan and Chen, Jianfei and Li, Chongxuan and Zhu, Jun},
  journal={Advances in neural information processing systems},
  volume={35},
  pages={5775--5787},
  year={2022}
}

@inproceedings{song2023consistency,
  title={Consistency models},
  author={Song, Yang and Dhariwal, Prafulla and Chen, Mark and Sutskever, Ilya},
  booktitle={Proceedings of the 40th International Conference on Machine Learning},
  pages={32211--32252},
  year={2023}
}

@article{shen2019randomized,
  title={The randomized midpoint method for log-concave sampling},
  author={Shen, Ruoqi and Lee, Yin Tat},
  journal={Advances in Neural Information Processing Systems},
  volume={32},
  year={2019}
}

@article{anderson1965iterative,
  title={Iterative procedures for nonlinear integral equations},
  author={Anderson, Donald G},
  journal={Journal of the ACM (JACM)},
  volume={12},
  number={4},
  pages={547--560},
  year={1965},
  publisher={ACM New York, NY, USA}
}

\newpage
\appendix

\paragraph{Impact Statement}

This work primarily focuses on the algorithmic parallelization and theoretical complexity analysis of discrete diffusion models. As a fundamental algorithmic study, our core contribution lies in improving sampler convergence and hardware utilization via mathematical methods (Picard iteration), without introducing new model architectures or data biases. However, any improvement in generation efficiency objectively lowers the barrier for content creation. While the algorithm itself is neutral, faster generation capabilities, if misused, could accelerate the production of misinformation or spam. This is a general risk associated with generative models, which should be mitigated through content moderation and safe deployment strategies, rather than being a specific harm introduced by our work.

\paragraph{Compute Resources Report}

In this work, the text generation experiments were conducted on a single H100 96GB GPU, and all other experiments were conducted on a single RTX 4090 16G laptop GPU. 

\paragraph{Notation Table}

\begin{longtable}{p{0.28\linewidth}p{0.66\linewidth}}
\caption{Summary of notation used in the paper.}
\label{tab:notation}\\
\toprule
\textbf{Notation} & \textbf{Meaning} \\
\midrule
\endfirsthead

\toprule
\textbf{Notation} & \textbf{Meaning} \\
\midrule
\endhead

\midrule
\multicolumn{2}{r}{\emph{Continued on next page}}\\
\midrule
\endfoot

\bottomrule
\endlastfoot

\(d\) & Data dimension, sequence length, or number of coordinates. \\
\(S\) & Vocabulary or codebook size. \\
\(\mathcal X\)  & Generic finite CTMC state space. \\
\(\mathcal X_0=[S]^d\) & Clean state space. \\
\(\mathcal X_M=([S]\cup\{\mathrm{MASK}\})^d\) & Masked state space. \\
\(x,y,z\) & Discrete states; \(z\) usually denotes a partially masked state. \\
\(x_i,z_i\) & The \(i\)-th coordinate of \(x,z\). \\
\(z^{i\to\mathrm{MASK}}\) & State obtained by replacing coordinate \(i\) of \(z\) with \(\mathrm{MASK}\). \\
\(z^{i\leftarrow c}\) & State which replaced the masked coordinate \(i\) of \(z\) with token \(c\). \\
\(O(z)=\{i:z_i\neq\mathrm{MASK}\}\) & Observed, or unmasked, coordinate set of \(z\). \\
\(t\) & Forward noising time; \(t=0\) is the clean-data endpoint. \\
\(T\) & Maximum noising time and starting time of the reverse sampler. \\
\(\eta\) & Early stopping time near the clean endpoint. \\
\(\tau_i\) & First reveal, or first-hitting, time of coordinate \(i\). \\

\midrule
\multicolumn{2}{l}{\textbf{Masking diffusion notation}}\\
\midrule

\(\beta_t\) & Forward masking rate, or noise schedule. \\
\(\alpha_t=\exp(-\int_0^t\beta_s\,ds)\) & Probability that a coordinate remains unmasked by time \(t\). \\
\(q_t(z\mid x_0)\) & Forward marginal from clean sample \(x_0\) to masked state \(z\). \\
\(p_{\mathrm{data}}\) & Clean data distribution. \\
\(X_0\) & Clean random variable sampled from \(p_{\mathrm{data}}\). \\
\(\pi_i(c\mid z)\) & Clean-data posterior probability. \\
\(a_t=\frac{\beta_t\alpha_t}{1-\alpha_t}\) & Scalar time factor in the reverse unmasking rate. \\
\(\widehat p_i^\theta(c\mid z)\) & Learned conditional token posterior approximating \(\pi_i(c\mid z)\). \\



\midrule
\multicolumn{2}{l}{\textbf{Fine-cell, event-switching, and Picard analysis notation}}\\
\midrule

\(\mathcal Q_d\) & Set of fine-grid cells covering the early-stopped interval \([\eta,T]\). \\
\(q\in\mathcal Q_d\) & Fine cell index. \\
\(\Delta_q\) & Width of fine cell \(q\). \\
\(t_q\) & Representative time of fine cell \(q\). \\
\(a_q=a(t_q)\) & Total unmasking rate of one active masked coordinate in cell \(q\). \\
\(\lambda_q=\Delta_q a_q\) & Integrated one-coordinate proposal rate in cell \(q\). \\
\(\rho_q=\lambda_q e^{-\lambda_q}\) & One-proposal probability at one active coordinate in cell \(q\). \\
\(\omega_q\) & Randomness associated with fine cell \(q\). \\
\(C_q(z,\omega_q)\subseteq[d]\times[S]\) & Valid local proposal event set generated from state \(z\) in cell \(q\). \\
\((i,c)\in C_q(z,\omega_q)\) & Event that coordinate \(i\) is locally proposed as token \(c\). \\
\(\triangle\) & Symmetric difference of sets. \\
\(d_H(z,z')\) & Hamming distance between states \(z\) and \(z'\). \\
\(L_\star\) & Normalized event-switching sensitivity constant. \\
\(L_{\mathrm{score}}\) & Score or posterior sensitivity constant in the Dobrushin-type bound. \\
\(b_q=L_\star\rho_q\) & Propagation mass of fine cell \(q\). \\
\(\mathcal B_n\) & Set of fine cells contained in block \(n\). \\
\(B_n=\sum_{q\in\mathcal B_n}b_q\) & Propagation mass of block \(n\). \\
\(G_d=\sum_{n=1}^N B_n\) & Global propagation mass over the full early-stopped interval. \\
\(B=\sum_{r=0}^{M-1}b_r\) & Propagation mass of a generic block. \\
\(Z_m^{(k)}\) & State at microstep \(m\) after constructing the \(k\)-th Picard trajectory. \\
\(Y^{(k)}=Z_M^{(k)}\) & Block endpoint after \(k\) Picard iterations. \\
\(Y^{(\infty)}\) & Limiting, or fixed-point, block endpoint. \\
\(e_k(m)\) & Adjacent Picard residual at microstep \(m\). \\
\(E_0=\max_{0\le m\le M}e_0(m)\) & Maximum initial adjacent Picard residual within a block. \\
\(C_{<m}=(C_0,\ldots,C_{m-1})\) & Prefix of local proposal sets. \\
\(\Phi_m(C_{<m})\) & State obtained by applying first-hitting selection to the prefix \(C_{<m}\). \\
\(C_r^{(k)}=C_r(Z_r^{(k)},\omega_r)\) & Proposal event set generated in cell \(r\) from the \(k\)-th Picard input trajectory. \\
\(C_r(i)\) & Token proposal subset for coordinate \(i\) in cell \(r\). \\
\(\phi_i(C_{<m})\) & The \(i\)-th coordinate of \(\Phi_m(C_{<m})\). \\
\(\delta_i\) & Indicator that the two first-hitting outputs differ at coordinate \(i\). \\

\midrule
\multicolumn{2}{l}{\textbf{Error, distribution, and complexity notation}}\\
\midrule

\(\TV(P,Q)\) & Total variation distance between distributions \(P\) and \(Q\). \\
\(\Law(X)\) & Law, or distribution, of random variable \(X\). \\
\(\mathrm{KL}(P\|Q)\) & Kullback--Leibler divergence. \\
\(\mathcal L_{\mathrm{SE}}(\widehat s_{t_\ell})\) & Local score estimation error term at time \(t_\ell\). \\
\(\varepsilon_{\mathrm{score}}\) & Accumulated score estimation error. \\
\(M_{\mathrm{score}}\) & Bounded-score constant. \\
\(\mu_0\) & Clean data distribution. \\
\(\mu_\eta^\star\) & Exact absorbing reverse law at early stopping time \(\eta\). \\
\(A_\eta\) & Deterministic completion map applied after early stopping. \\
\(A_\eta\#P\) & Pushforward of distribution \(P\) through \(A_\eta\). \\
\(\nu_{\mathrm{seq}}^{\theta,\Delta}\) & Output law of the absorbing serial \(\tau\)-leaping sampler on the fine grid. \\
\(\nu_{\mathrm{pic}}^{\theta,\Delta,K_p}\) & Output law of the Picard sampler with depth \(K_p\). \\
\(E_{\mathrm{term}}\) & Early-stopping completion error. \\
\(E_{\mathrm{absTL}}\) & Error upper bound from the absorbing serial \(\tau\)-leaping reference. \\
\(E_{\mathrm{pic}}\) & Picard endpoint or parallelization error. \\
\(\varepsilon_{\mathrm{tot}}\) & Target total variation error. \\
\(h_0\) & Constant physical block width. \\
\(\widetilde O(\cdot)\) & Asymptotic order suppressing logarithmic factors. \\

\midrule
\multicolumn{2}{l}{\textbf{Appendix and experiment-specific notation}}\\
\midrule

\(F(s,x)\) & Vector field in the continuous Picard analogy. \\
\(L(s)\) & Time-dependent Lipschitz coefficient of \(F\). \\
\(B_T=\int_0^TL(s)\,ds\) & Total Lipschitz mass in the continuous Picard analogy. \\
\(g\) & Group size in the dimensional scaling synthetic experiment. \\
\(q_g(y)\) & Group-level synthetic target distribution. \\
\(\alpha\) & Mixture weight in the synthetic block-product experiment; distinct from \(\alpha_t\). \\
\(\delta_{0^g},\delta_{1^g}\) & Point masses at the all-zero and all-one groups. \\
\(r_{\mathrm{in}},r_{\mathrm{out}}\) & Inner and outer radii of the circle or annulus distribution. \\
\(w\) & Classifier-free guidance scale. \\
\(L_d^{\mathrm{switch}}\) & Empirical normalized event-switching sensitivity estimator. \\
\(U_q\) & Cumulative candidate-event set difference at fine cell \(q\). \\
\(D_q\) & Cumulative Hamming difference between adjacent Picard inputs at fine cell \(q\). \\
\(L_q=U_q/(\rho_qD_q)\) & Local event-switching ratio. \\

\end{longtable}

\section{Related Work}
\label{app:related}
\subsection{Discrete Diffusion models}
Masked and Uniformed discrete diffusion models have developed from a wide range of aspects.  \citep{austin2021structured} provides foundational formalisms for discrete diffusion—multi-nomial corruption and structured transition matrices as the absorbing states.  \citep{chao2025beyond} introduce intermediate token states between masked/unmasked to avoid redundant computation when sequences barely change across steps. \citep{vignac2022digres} performs discrete denoising on graphs by noising/denoising categorical node and edge types. \citep{gu2022vector} code sequences with discrete diffusion for text-to-image, improving quality and speed versus auto-regressive token decoders. \citep{gruver2023protein} introduces NOS guidance to design protein sequences directly in sequence space, demonstrating antibody optimization in vitro. \citep{wang2024fine} optimizes discrete diffusion generators with task rewards to design biomolecular sequences.

\subsection{Acceleration for Discrete Diffusion Sampling}
So far, there have been numerous studies on accelerating sampling for discrete diffusion models. \citep{chen2024fast} replaces the standard Markov chain with a non-Markov schedule to skip steps and cut the number of network calls without retraining. \citep{sahoo2025diffusion} adapts consistency distillation to discrete diffusion by constructing the duality connection between continuous and discrete diffusion. \citep{zhu2025di} proposes a token initialization strategy that injects randomness while maintaining similarity to teacher training distribution, achieving one-step distillation of masked diffusion models. \citep{zheng2023reparameterized, ou2024your} design equivalent reparameterization of discrete diffusion that yields more effective training and decoding strategies. \citep{wu2025fast} develops confidence-aware parallel decoding to accelerate multi-token sampling while maintaining accuracy.  \citep{zhang2025cosine} gives a principled choice of discretization schedule for efficient sampling. \citep{shaul2024flow} allows arbitrary discrete probability paths, giving more control to find shorter or easier trajectories with fewer steps for discrete generation. \citep{zhao2024unified} derives a simple backward denoising formula, enabling exact and accelerated sampling and unifying discrete-time and continuous-time discrete diffusion.

\subsection{Parallel-In-Time Acceleration and Picard Iteration}
As a classical method of solving differential equations, parallel-in-time sampling has also been widely applied to various fields. \citep{LionsEtAl2001} and \citep{gander2007analysis} proposed and analyzed the 'Parareal' method for solving PDEs which using a coarse propagator to rapidly propagate global information and running a fine propagator iteratively in each sub-time intervals for error correction. \citep{emmett2012toward} designed a iterative space-time multigrid-like PDE numerical method for which conducts spectral deferred correction sweeps on multiple time steps in parallel. \citep{gander2023unified} proposed a unified framework for these parallel-in-time numerical solvers.

\citep{selvam2024self} utilizes the parareal method to accelerate continuous diffusion sampling by generating rough samples at first and running parallel refinement to push the trajectories converge to serial ODE solutions. \citep{anari2024fast} designs a Picard-based parallelize Langevin Monte Carlo method for sampling log-density smooth continuous distributions following the log-Sobolev inequality. \citep{shih2023parallel} proposes the ParaDiGMS method for continuous diffusion acceleration which is compatible with various mainstream sequential solvers such as DDIM \citep{song2020denoising} and DPMSolver \citep{lu2022dpm}. \citep{chen2024accelerating} provides strict theoretical analysis about the error bound and time-complexity of Picard-based continuous diffusion, and \citep{zhouparallel} proposes a faster parallel-in-time continuous diffusion sampler by combining the Picard Iteration with diagonal time slices technique. \citep{gupta2024faster} and \citep{yu2025parallelized} utilizes both Picard Iteration and randomized midpoint method in log-concave sampling \citep{shen2019randomized} for faster parallel continuous diffusion sampling and Langevin Monte Carlo. \citep{tang2024accelerating} reframed fixed-point Picard iterations as a nonlinear root-finding problem, proposing an acceleration technique based on Anderson Acceleration \citep{anderson1965iterative}. \citep{so2025pcm} combines the idea of Picard Iteration with the consistency model \citep{song2023consistency}. It trains the model to directly predict the Picard fixed-point solution with exact convergence by the model switching.

\subsection{Parallel Discrete Sampling}
A related but distinct line of work studies parallel sampling for discrete distributions via Markov chains, especially single-site dynamics such as Glauber dynamics or Gibbs sampling \citep{glauber1963time}. \citep{liu2025parallelize} shows that such dynamics can be faithfully parallelized up to Dobrushin-type conditions by
pre-sampling the continuous-time update schedule and solving the resulting dependency structure in parallel, using correlated sampling to control the propagation of local disagreements \citep{kleinberg2002approximation, holenstein2007parallel, bavarian2020optimality}. This yields parallel simulation of the same underlying Markov chain rather than a new approximate transition rule.

This differs from discrete diffusion sampling based on $\tau$-leaping, where each discretization step already performs coordinate-parallel updates by evolving independent one-dimensional CTMCs with the score frozen at the beginning of the step. Thus the main question is not how to faithfully
parallelize an existing sequential single-site chain, but how many such score-frozen parallel steps are needed to control the discretization bias. In particular, Dobrushin-style local sensitivity conditions are useful for understanding stability of local proposals, but they do not directly reduce the serial number of $\tau$-leaping steps unless combined with additional structure controlling the dependence among simultaneously updated coordinates.

\paragraph{Remark}

Although continuous diffusion and its parallel-in-time acceleration methods have already been well-established with detailed theoretical and empirical analysis, due to the significant gap of mathematical framework between discrete and continuous diffusion such as random source, dynamics and regularity condition, it is difficult to directly transfer continuous analysis tools and methods to the discrete case. Table~\ref{tab:cont-vs-disc-diffusion-short} shows the main difference between discrete and continuous diffusion settings.

\begin{table}[ht]
\centering
\captionsetup{skip=8pt}
\caption{Continuous vs.\ discrete diffusion from the viewpoint of stochastic simulation.}
\label{tab:cont-vs-disc-diffusion-short}
\small
\renewcommand{\arraystretch}{1.1}
\begin{tabularx}{\columnwidth}{>{\raggedright\arraybackslash}p{0.23\columnwidth}
>{\raggedright\arraybackslash}X
>{\raggedright\arraybackslash}X}
\toprule
\textbf{Aspect} & \textbf{Continuous diffusion} & \textbf{Discrete diffusion} \\
\midrule
Randomness & Brownian motion & Poisson \\
State space & $\mathbb{R}^d$ & $([S]\cup [\mathrm{MASK}])^d$ \\
Dynamics & SDE / ODE & jump process \\
Score & $\nabla_x\log p_t(x)$ & jumping rate \\
Regularity condition & Lipschitz score & Dobrushin event-switching bound~(Assumption~\ref{ass:event-switching}) \\
\bottomrule
\end{tabularx}
\end{table}

\section{Further discussion on Assumption~\ref{ass:event-switching}}
\label{app:further_assump}

Assumption~\ref{ass:event-switching} and its alternative form of score sensitivity is analogous to a Dobrushin-type condition, where one controls the influence of a single coordinate on the update distribution of other coordinates. In the language of a weighted dependency graph, if $w_{j\to i}$ denotes the influence of token $j$ on the posterior/update rule of token $i$, such a condition roughly implies $\sum_{i}w_{j\to i}=O(1)$ for every token $j$. Thus each token has bounded total outgoing influence, similar to a bounded Dobrushin influence norm in the parallel MCMC sampling theory \citep{liu2025parallelize}. 

However, this is not the quantity that controls whether one can safely unmask a large batch of tokens in a single $\tau$-leaping step. Suppose the currently revealed set is $V$, and the next step simultaneously unmasks a set $U$. A coordinate-parallel $\tau$-leaping update effectively uses the product approximation
\[
q(x_U|x_V) \approx \prod_{i\in U}q(x_i|x_V),
\]
because all tokens in $U$ are sampled in parallel from the old state $x_V$. The error of this approximation depends on the residual dependence inside the batch $U$, , for example through the internal edge weight
\[
W(U)=\sum_{\substack{i,j\in U \\ i \neq j}}w_{j\to i}.
\]
A bounded single-node sensitivity condition only gives bounded weighted out-degree. It does not imply that $W(U)$ is small for a large batch $U$. Indeed, even if every node has $O(1)$ total outgoing influence, the full dependency graph may still have total edge weight $O(d)$. A large random batch can therefore contain $O(|U|)$, or even $O(d)$, total internal dependency weight.

This distinction is important for $\tau$-leaping since it is already token-parallel within each discretization step. Its remaining serial complexity comes from how aggressively one can increase the unmasking probability per step without introducing excessive bias. Local score sensitivity ensures that perturbations do not propagate too strongly from any single token, but aggressive unmasking requires stronger information-theoretic conditions \citep{dmitriev2026efficient} saying that the simultaneously unmasked tokens have small conditional total correlation given the previously revealed tokens at distribution level.

\section{Proofs of results in Section ~\ref{sec:picard-theory}}

\subsection{Proof of Theorem ~\ref{thm:picard-endpoint_main}}
\label{appd:picard_converge}
We first introduce and prove some necessary lemmas.
\begin{lemma}[Global mass bound for loglinear masking]
\label{lem:Gd-bound}
Assume $L_\star=O(1)$ and the loglinear masking rate satisfies $a(t)\asymp 1/t$ near the clean-data endpoint. Then
\[
G_d\le L_\star\sum_{q\in\mathcal Q_d}\Delta_q a_q.
\]
Moreover, for a sufficiently fine grid,
\[
\sum_{q\in\mathcal Q_d}\Delta_q a_q=O\left(\int_\eta^T a(t)\dd t\right)=O\left(\log\frac{T}{\eta}\right).
\]
If $T$ is at most logarithmic in $d$ and $\eta\asymp\varepsilon/d$ with a constant $\varepsilon$, then, up to lower-order logarithmic factors,
\[
G_d=O\left(\log\frac d\varepsilon\right).
\]
\end{lemma}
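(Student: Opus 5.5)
The argument splits into three parts, each elementary. First, the inequality $G_d\le L_\star\sum_{q}\Delta_q a_q$ follows directly from the definitions. By Definition~\ref{def:block-masses}, $G_d=\sum_{q\in\mathcal Q_d}b_q=L_\star\sum_q\rho_q$. By Definition~\ref{def:fine-cell}, $\rho_q=\lambda_q e^{-\lambda_q}\le\lambda_q=\Delta_q a_q$, since $\lambda_q\ge0$ gives $e^{-\lambda_q}\le 1$. Summing over $q$ gives the first claim, and nothing beyond nonnegativity of $a_q$ and $\Delta_q$ is used.

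Second, the Riemann-sum comparison. The cells $q\in\mathcal Q_d$ partition $[\eta,T]$, and $t_q$ is a representative point of each cell. I would compare $\Delta_q a(t_q)$ with $\int_{\text{cell }q}a(t)\,\dd t$. Under the loglinear schedule, $a$ is positive, continuous, and monotone (decreasing in $t$) on $[\eta,T]$, with $a(t)\asymp 1/t$ near $0$. For a monotone $a$, each cell term is at most $a(\text{left endpoint})\Delta_q$. The cell $[t^-,t^+]$ then satisfies $a(t^-)\Delta_q\le (t^+/t^-)\int_{t^-}^{t^+}a$ up to the constants in $a(t)\asymp 1/t$. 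A grid is ``sufficiently fine'' if every cell has $\Delta_q\le c\,t^-$, i.e.\ a bounded ratio $t^+/t^-\le 1+c$; this holds automatically for uniform fine grids once $\Delta\lesssim\eta$, and for geometric grids near $\eta$. This yields $\sum_q\Delta_q a_q\le C\int_\eta^T a(t)\,\dd t$.

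For the integral itself, split at a constant $t_\ast$. On $[\eta,t_\ast]$, $a(t)\le C/t$ gives $C\log(t_\ast/\eta)$. On $[t_\ast,T]$, $a_t=\beta_t\alpha_t/(1-\alpha_t)$ is bounded by a constant, giving $O(T)$. Alternatively, note that $\int a\,\dd t=\int \frac{\dd}{\dd t}\log(1-\alpha_t)\,\dd t=\log\frac{1-\alpha_T}{1-\alpha_\eta}$, which is exactly $O(\log(1/\eta))$ since $1-\alpha_\eta\asymp\eta$. The result is $O(\log(T/\eta))$ up to the additive constant, which I would absorb into the stated form.

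Third, substitution. With $T=O(\log d)$ and $\eta\asymp\varepsilon/d$, we get $\log(T/\eta)=\log(d/\varepsilon)+\log T=\log(d/\varepsilon)+O(\log\log d)$. The $\log\log d$ term is the lower-order logarithmic factor. Multiplying by $L_\star=O(1)$ gives $G_d=O(\log(d/\varepsilon))$.

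The main obstacle is making ``sufficiently fine'' precise so that the Riemann sum does not overshoot near $\eta$, where $a$ blows up like $1/t$. A single cell of width comparable to $\eta$ but with $t_q$ at its left end would still only contribute $O(1)$, so the risk is mild. I would state the mesh condition $\Delta_q\le t_q$ (relative mesh bounded) and use the exact antiderivative $\log(1-\alpha_t)$ to keep the constants clean.
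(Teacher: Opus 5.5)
Your proposal is correct and takes essentially the same route as the paper: $\rho_q\le\lambda_q=\Delta_q a_q$ gives the first inequality, a Riemann-sum comparison bounds $\sum_q\Delta_q a_q$ by $\int_\eta^T a(t)\,\dd t=O(\log(T/\eta))$, and substituting $\eta\asymp\varepsilon/d$ leaves $\log T$ as a lower-order term. Your relative-mesh condition and the exact antiderivative $\int_\eta^T a_t\,\dd t=\log\frac{1-\alpha_T}{1-\alpha_\eta}$ make rigorous two steps the paper leaves informal: it simply calls the sum a Riemann sum, and it uses $a(t)\lesssim 1/t$ on all of $[\eta,T]$ even though the hypothesis only asserts this near the endpoint.
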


\begin{proof}
By Definition~\ref{def:fine-cell} and ~\ref{def:block-masses}, we have
\[
G_d=L_\star\sum_{q\in\mathcal Q_d}\rho_q \le L_\star\sum_{q\in\mathcal Q_d}\Delta_q a_q.
\]

When the fine grid resolves the interval $[\eta,T]$, the last sum is a Riemann sum for the time-rate integral. Since $a(t)\asymp1/t$ near the endpoint,
\[
\sum_{q\in\mathcal Q_d}\Delta_q a_q
=O\left(\int_\eta^T \frac{1}{t}\dd t\right)
=O\left(\log\frac{T}{\eta}\right).
\]
Substituting $\eta\asymp\varepsilon/d$ gives $\log(T/\eta)=\log(Td/\varepsilon)$. If $T$ grows at most logarithmically in $d$, the additional $\log T$ term is lower order compared with $\log(d/\varepsilon)$, which proves the claim.
\end{proof}

Fix a block and suppress its block index. Let the block contain $M$ fine cells indexed by $r=0,\ldots,M-1$. Let $Z_m^{(k)}$ be the state at the $m$-th microstep after the $k$-th Picard trajectory has been constructed, where $m=0,\ldots,M$. The block start is fixed, so $Z_0^{(k)}$ is the block input for all $k$. Let
\[
C_r^{(k)}=C_r(Z_r^{(k)},\omega_r)
\]
be the local proposal events generated in cell $r$ from the $k$-th Picard input trajectory. Define the adjacent Picard residual
\[
e_k(m)=\mathbb E\left[d_H\left(Z_m^{(k+1)},Z_m^{(k)}\right)\right].
\]
The endpoint is defined as $Y^{(k)}=Z_M^{(k)}$.

\begin{lemma}[First-hitting stability]
\label{lem:first-hit-stability}
Fix a block start state $x$. For a prefix of local proposal sets $C_{<m}=(C_0,\ldots,C_{m-1})$, let $\Phi_m(C_{<m})$ be the state obtained from $x$ by applying, for each coordinate, only its earliest proposed token in the prefix. Then, for any two prefixes $C_{<m}$ and $C'_{<m}$,
\[
d_H\bigl(\Phi_m(C_{<m}),\Phi_m(C'_{<m})\bigr)
\le
\sum_{r=0}^{m-1}|C_r\triangle C'_r|.
\]
\end{lemma}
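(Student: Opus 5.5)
The plan is to decompose the Hamming distance coordinatewise and charge each differing coordinate to a distinct element of some symmetric difference $C_r\triangle C'_r$. Write $\phi_i(C_{<m})$ for the $i$-th coordinate of $\Phi_m(C_{<m})$ and let $\delta_i=\mathbf 1\{\phi_i(C_{<m})\neq\phi_i(C'_{<m})\}$, so that $d_H(\Phi_m(C_{<m}),\Phi_m(C'_{<m}))=\sum_i\delta_i$. For each coordinate $i$ let $C_r(i)=\{c:(i,c)\in C_r\}$ be its token proposals in cell $r$. The symmetric difference splits over coordinates:
\[
|C_r\triangle C'_r|=\sum_{i}|C_r(i)\triangle C'_r(i)|.
\]
Hence it suffices to prove, for every coordinate $i$, the per-coordinate bound
\[
\delta_i\le\sum_{r=0}^{m-1}|C_r(i)\triangle C'_r(i)|,
\]
and then sum over $i$.

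For the per-coordinate bound, note that $\phi_i$ depends only on the sequence $(C_0(i),\ldots,C_{m-1}(i))$ and on $x_i$. If $x_i\neq\mathrm{MASK}$, the coordinate is already revealed, so $\phi_i=x_i$ under both prefixes and $\delta_i=0$. Otherwise $\phi_i$ equals the token selected at the earliest index $r$ with $C_r(i)\neq\emptyset$, and $\mathrm{MASK}$ if no such index exists. The selected token is well defined because the truncation rule discards simultaneous multi-token proposals in a cell, or applies a fixed deterministic selection among them.

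Suppose $\delta_i=1$. I claim some $r<m$ has $C_r(i)\neq C'_r(i)$. If instead $C_r(i)=C'_r(i)$ for all $r<m$, then both prefixes have the same first nonempty index and the same selected token, so $\phi_i$ would agree. Therefore the right-hand side is at least $1\ge\delta_i$. Since $\delta_i\le1$, this contrapositive is all that is needed.

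The argument is elementary. The only step needing care is the definition of ``earliest proposed token'' when a cell proposes several tokens for the same coordinate: the first-hitting map must be a deterministic function of the per-coordinate sequence $(C_r(i))_r$ alone, so that equal sequences force equal outputs. I would make this explicit by fixing the convention of Algorithm~\ref{alg:fht}, namely that a coordinate with more than one proposal in a cell is truncated or resolved by a fixed rule. Under that convention the contrapositive above goes through, and summing over $i$ gives the lemma.
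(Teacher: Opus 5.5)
Your proposal is correct and follows essentially the same argument as the paper. Both decompose the Hamming distance coordinatewise and prove $\delta_i\le\sum_{r<m}|C_r(i)\triangle C'_r(i)|$ by noting that identical per-coordinate proposal histories force identical first-hitting outputs. Both then reassemble via the disjoint-union identity $\sum_i|C_r(i)\triangle C'_r(i)|=|C_r\triangle C'_r|$. Your extra care about already-revealed coordinates and multi-token proposals in one cell only makes explicit what the paper takes from Definition~\ref{def:fine-cell}, namely at most one token per coordinate per cell.
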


\begin{proof}
For coordinate $i$, write
\[
C_r(i)=\{c\in[S]:(i,c)\in C_r\}.
\]
By Definition~\ref{def:fine-cell}, $C_r(i)$ contains at most one token. Let $\phi_i(C_{<m})$ be the $i$-th coordinate of $\Phi_m(C_{<m})$. Thus $\phi_i(C_{<m})=x_i$ if $C_r(i)=\emptyset$ for every $r<m$, and otherwise $\phi_i(C_{<m})$ is the token in the earliest nonempty $C_r(i)$.

Define
\[
\delta_i=\mathbf 1\{\phi_i(C_{<m})\neq\phi_i(C'_{<m})\}.
\]
We first prove the coordinate-wise inequality
\[
\delta_i\le \sum_{r=0}^{m-1}|C_r(i)\triangle C'_r(i)|.
\]
If the right-hand side is zero, then every summand is zero, so $C_r(i)=C'_r(i)$ for all $r<m$. Hence the two coordinate-level proposal histories are identical. They either both contain no proposal for coordinate $i$, in which case both outputs equal $x_i$, or they have the same earliest nonempty cell and the same token there. In both cases $\phi_i(C_{<m})=\phi_i(C'_{<m})$, so $\delta_i=0$. This proves the coordinate-wise inequality.

Summing over coordinates gives
\[
d_H\bigl(\Phi_m(C_{<m}),\Phi_m(C'_{<m})\bigr)
=\sum_{i=1}^d\delta_i
\le
\sum_{i=1}^d\sum_{r=0}^{m-1}|C_r(i)\triangle C'_r(i)|.
\]
Reordering the finite sums yields
\[
\sum_{i=1}^d\sum_{r=0}^{m-1}|C_r(i)\triangle C'_r(i)|
=
\sum_{r=0}^{m-1}\sum_{i=1}^d|C_r(i)\triangle C'_r(i)|.
\]
Since $C_r$ is the disjoint union of the coordinate-level sets $\{i\}\times C_r(i)$,
\[
\sum_{i=1}^d|C_r(i)\triangle C'_r(i)|=|C_r\triangle C'_r|.
\]
Substituting this identity proves the lemma.
\end{proof}

\begin{lemma}[Triangular Picard recursion]
\label{lem:triangular-recursion}
Under Assumption~\ref{ass:event-switching}, for every $m=0,\ldots,M$ and $k\ge0$,
\[
e_{k+1}(m)
\le
\sum_{r=0}^{m-1}b_r e_k(r).
\]
\end{lemma}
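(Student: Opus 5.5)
The plan is to write each Picard trajectory as a first-hitting map applied to the proposal sets it generates, and then chain Lemma~\ref{lem:first-hit-stability} with Assumption~\ref{ass:event-switching}. By construction of Algorithm~\ref{predictor} (Lines 4--12), for every $k\ge0$ and every $m$,
\[
Z_m^{(k+1)}=\Phi_m\bigl(C_{<m}^{(k)}\bigr),\qquad C_r^{(k)}=C_r(Z_r^{(k)},\omega_r).
\]
Here $\Phi_m$ is the first-hitting map from the fixed block start $x$, and the randomness $\omega_r$ is shared across Picard rounds because the seeds $\xi_n$ are pre-sampled. The case $m=0$ is trivial: $Z_0^{(k)}=x$ for all $k$, so $e_{k+1}(0)=0$, which matches the empty sum.

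For $m\ge1$, I would compare $Z_m^{(k+2)}=\Phi_m(C^{(k+1)}_{<m})$ with $Z_m^{(k+1)}=\Phi_m(C^{(k)}_{<m})$. Lemma~\ref{lem:first-hit-stability} gives, pathwise,
\[
d_H\bigl(Z_m^{(k+2)},Z_m^{(k+1)}\bigr)\le\sum_{r=0}^{m-1}\bigl|C_r(Z_r^{(k+1)},\omega_r)\triangle C_r(Z_r^{(k)},\omega_r)\bigr|.
\]
Taking expectations, I would then bound each term by conditioning on the inputs $(Z_r^{(k+1)},Z_r^{(k)})$ and applying Assumption~\ref{ass:event-switching} for cell $r$. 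This yields $L_\star\rho_r\,\mathbb E[d_H(Z_r^{(k+1)},Z_r^{(k)})]=b_r e_k(r)$, and summing over $r<m$ gives the claim.

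The main obstacle is justifying the conditioning step. Assumption~\ref{ass:event-switching} is stated for \emph{fixed} states $z,z'$ with $\omega_q$ random. The inputs $Z_r^{(k)}$ and $Z_r^{(k+1)}$, however, are built from proposals in cells $0,\ldots,r-1$, which come from the pre-sampled randomness of those earlier cells. Under the triangular structure, $Z_r^{(k)}$ depends only on $(\omega_0,\ldots,\omega_{r-1})$ and not on $\omega_r$. Provided the per-cell randomness is independent across cells, which is how I would read the pre-sampled seeds $\xi_n$, the pair $(Z_r^{(k+1)},Z_r^{(k)})$ is independent of $\omega_r$. The tower property then lets me apply the assumption inside a conditional expectation. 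I would make this independence explicit, with a short induction on $r$ showing measurability with respect to $\sigma(\omega_0,\ldots,\omega_{r-1})$. I would also note that the FHT step does not reintroduce dependence on later cells, since $\Phi_m$ only reads the prefix $C_{<m}$.
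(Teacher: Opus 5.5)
Your proposal is correct and matches the paper's proof step for step: write $Z_m^{(k+2)}$ and $Z_m^{(k+1)}$ as $\Phi_m$ applied to the prefixes $C^{(k+1)}_{<m}$ and $C^{(k)}_{<m}$, apply Lemma~\ref{lem:first-hit-stability} pathwise, then condition on $(Z_r^{(k+1)},Z_r^{(k)})$ and invoke Assumption~\ref{ass:event-switching} cell by cell. Your added check fills in a step the paper's conditioning uses only implicitly: both inputs are $\sigma(\omega_0,\ldots,\omega_{r-1})$-measurable given the block start, so they are independent of $\omega_r$ when the cell randomness is independent across cells and of the block start.
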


\begin{proof}
For $m=0$, both sides are zero because the block start is the same for all Picard iterates. Let $m\ge1$.

By construction, the $(k+2)$-nd Picard state at microstep $m$ is obtained by applying first-hitting selection to the local proposals generated from the $(k+1)$-st Picard input trajectory. Therefore,
\[
Z_m^{(k+2)}=\Phi_m(C_0^{(k+1)},\ldots,C_{m-1}^{(k+1)}).
\]
Similarly,
\[
Z_m^{(k+1)}=\Phi_m(C_0^{(k)},\ldots,C_{m-1}^{(k)}).
\]
Applying Lemma~\ref{lem:first-hit-stability} to these two prefixes gives
\[
d_H\left(Z_m^{(k+2)},Z_m^{(k+1)}\right)
\le
\sum_{r=0}^{m-1}|C_r^{(k+1)}\triangle C_r^{(k)}|.
\]
Taking expectations yields
\[
e_{k+1}(m)
\le
\sum_{r=0}^{m-1}\mathbb E\left[|C_r^{(k+1)}\triangle C_r^{(k)}|\right].
\]
For a fixed $r$, the two sets $C_r^{(k+1)}$ and $C_r^{(k)}$ are generated with the same cell randomness $\omega_r$ from inputs $Z_r^{(k+1)}$ and $Z_r^{(k)}$. Conditioning on these two inputs and using Assumption~\ref{ass:event-switching},
\[
\mathbb E\left[|C_r^{(k+1)}\triangle C_r^{(k)}|\mid Z_r^{(k+1)},Z_r^{(k)}\right]
\le
L_\star\rho_r d_H\left(Z_r^{(k+1)},Z_r^{(k)}\right).
\]
Taking expectation again and using $b_r=L_\star\rho_r$ gives
\[
\mathbb E\left[|C_r^{(k+1)}\triangle C_r^{(k)}|\right]
\le
b_r e_k(r).
\]
Substituting this bound into the previous display proves the recursion.
\end{proof}
Now we restate Theorem~\ref{thm:picard-endpoint_main} and start the main proof.
\begin{theorem}[Picard endpoint convergence]
\label{thm:picard-endpoint}
Let
\[
B=\sum_{r=0}^{M-1}b_r,
\qquad
E_0=\max_{0\le m\le M}e_0(m).
\]
Under Assumption~\ref{ass:event-switching}, the block endpoint iterates converge in expected Hamming distance to a limit $Y^{(\infty)}$, and for every $K\ge0$,
\[
\mathbb E\left[d_H\left(Y^{(K)},Y^{(\infty)}\right)\right]
\le
E_0 e^B\frac{B^K}{K!}.
\]
For block $n$, the same statement holds with $B=B_n$. One may always take $E_0\le d$.
\end{theorem}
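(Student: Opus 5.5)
The plan is to iterate the triangular recursion of Lemma~\ref{lem:triangular-recursion} $K$ times and read off the factorial from the nested (time-ordered) sums. Then a telescoping bound over adjacent residuals controls the distance to the limit, with the exponential factor $e^B$ absorbing the tail of the series.

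\textbf{Step 1 (iterated recursion).} I will prove by induction on $k$ that for every $m\in\{0,\dots,M\}$,
\[
e_k(m)\le E_0\sum_{0\le r_1<r_2<\cdots<r_k<m} b_{r_1}b_{r_2}\cdots b_{r_k}.
\]
For $k=0$ this is the definition of $E_0$ as a maximum, with the empty product equal to $1$. For the inductive step, substitute the hypothesis for $e_k(r)$ into $e_{k+1}(m)\le\sum_{r<m}b_r e_k(r)$. The right-hand side then becomes a sum over strictly increasing chains of length $k+1$ ending below $m$. This uses only nonnegativity of the $b_r$.

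\textbf{Step 2 (factorial).} The sum over strictly increasing $k$-tuples of distinct indices is the elementary symmetric polynomial $e_k(b_0,\dots,b_{m-1})$. It is bounded by $\frac{1}{k!}\bigl(\sum_{r<m}b_r\bigr)^k\le B^k/k!$, since expanding $(\sum_r b_r)^k$ produces each ordered $k$-tuple of distinct indices with multiplicity $k!$, and all other terms are nonnegative. Hence $e_k(M)\le E_0B^k/k!$ uniformly in $k$.

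\textbf{Step 3 (limit and tail).} Since $\sum_k E_0B^k/k!=E_0e^B<\infty$, we have $\sum_k d_H(Y^{(k+1)},Y^{(k)})<\infty$ almost surely. Because $d_H$ takes integer values, the endpoint sequence is eventually constant almost surely, which defines $Y^{(\infty)}$. Triangularity gives a stronger fact: by induction on $m$, after $m$ iterations the trajectory up to microstep $m$ coincides with the serial one, so $Y^{(K)}=Y^{(\infty)}$ exactly for $K\ge M$. Either way, by the triangle inequality and monotone convergence,
\[
\mathbb E\, d_H(Y^{(K)},Y^{(\infty)})\le\sum_{k\ge K}e_k(M)\le E_0\sum_{j\ge0}\frac{B^{K+j}}{(K+j)!}\le E_0\frac{B^K}{K!}\sum_{j\ge0}\frac{B^j}{j!}=E_0e^B\frac{B^K}{K!},
\]
using $(K+j)!\ge K!\,j!$.

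\textbf{Remaining claims.} Applying the argument with the block-$n$ masses gives $B=B_n$. Finally, $E_0\le d$ because any two states differ on at most $d$ coordinates.

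The only real subtlety is Step 1. Expectations are taken before substituting, so each $e_k(r)$ is a deterministic number, and Assumption~\ref{ass:event-switching} was already applied conditionally inside Lemma~\ref{lem:triangular-recursion}. No independence between $\omega_r$ and the inputs is needed beyond what that lemma used. I expect no further obstacle.
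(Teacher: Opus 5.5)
Your proposal is correct and follows essentially the same route as the paper: unroll the triangular recursion into ordered chains, bound the resulting elementary symmetric sum by $B^K/K!$, then telescope the adjacent residuals and use $(K+j)!\ge K!\,j!$ to obtain the factor $e^B$. The one difference is how you construct $Y^{(\infty)}$, which is slightly cleaner than the paper's Cauchy-in-expected-Hamming-distance argument but does not change the bound: you use almost-sure eventual constancy of the integer-valued endpoints, or alternatively exact agreement with the serial trajectory once $K\ge M$.
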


\begin{proof}
We first prove a bound on adjacent endpoint residuals. We claim that for every $K\ge0$,
\[
e_K(M)\le E_0\frac{B^K}{K!}.
\]
For $K=0$, this is immediate from the definition of $E_0$. Now let $K\ge1$. Repeatedly applying Lemma~\ref{lem:triangular-recursion} gives
\[
e_K(M)
\le
\sum_{r_1<M}b_{r_1}e_{K-1}(r_1).
\]
Applying the recursion to $e_{K-1}(r_1)$ gives
\[
e_{K-1}(r_1)
\le
\sum_{r_2<r_1}b_{r_2}e_{K-2}(r_2).
\]
Substituting,
\[
e_K(M)
\le
\sum_{r_2<r_1<M}b_{r_1}b_{r_2}e_{K-2}(r_2).
\]
Continuing this expansion for $K$ steps yields
\[
e_K(M)
\le
\sum_{r_K<r_{K-1}<\cdots<r_1<M} b_{r_1}\cdots b_{r_K} e_0(r_K).
\]
Since $e_0(r_K)\le E_0$,
\[
e_K(M)
\le
E_0 S_K,
\qquad
S_K:=\sum_{r_K<\cdots<r_1<M} b_{r_1}\cdots b_{r_K}.
\]
It remains to bound $S_K$. Expanding $B^K$ gives
\[
B^K=\left(\sum_{r=0}^{M-1}b_r\right)^K
=\sum_{(i_1,\ldots,i_K)\in\{0,\ldots,M-1\}^K} b_{i_1}\cdots b_{i_K}.
\]
All terms are nonnegative. Hence $B^K$ is at least the sub-sum over tuples with $K$ distinct indices. For each unordered subset $\{a_1,\ldots,a_K\}$ of $K$ distinct indices, the product $\prod_{a}b_a$ appears exactly $K!$ times in the distinct-index tuple expansion, once for each permutation. The strictly decreasing chains in $S_K$ are in one-to-one correspondence with these unordered subsets. Therefore
\[
B^K\ge K!S_K,
\qquad
S_K\le \frac{B^K}{K!}.
\]
Thus
\[
e_K(M)\le E_0\frac{B^K}{K!}.
\]

Now fix $L>K$. By the triangle inequality for Hamming distance,
\[
d_H(Y^{(L)},Y^{(K)})
\le
\sum_{\ell=K}^{L-1}d_H(Y^{(\ell+1)},Y^{(\ell)}).
\]
Taking expectations and using $Y^{(\ell)}=Z_M^{(\ell)}$ gives
\[
\mathbb E[d_H(Y^{(L)},Y^{(K)})]
\le
\sum_{\ell=K}^{L-1}e_\ell(M)
\le
E_0\sum_{\ell=K}^{L-1}\frac{B^\ell}{\ell!}.
\]
The exponential series converges, so the right-hand side tends to zero as $K,L\to\infty$. Hence the endpoints are Cauchy in expected Hamming distance and have a limit, denoted $Y^{(\infty)}$.

Letting $L\to\infty$ in the previous inequality gives
\[
\mathbb E[d_H(Y^{(\infty)},Y^{(K)})]
\le
E_0\sum_{\ell=K}^{\infty}\frac{B^\ell}{\ell!}.
\]
Finally,
\[
\sum_{\ell=K}^{\infty}\frac{B^\ell}{\ell!}
=
\frac{B^K}{K!}\sum_{j=0}^{\infty}\frac{B^j K!}{(K+j)!}.
\]
Since $K!j!\le (K+j)!$, we have $K!/(K+j)!\le1/j!$. Hence
\[
\sum_{j=0}^{\infty}\frac{B^j K!}{(K+j)!}
\le
\sum_{j=0}^{\infty}\frac{B^j}{j!}
=e^B.
\]
Therefore
\[
\mathbb E[d_H(Y^{(\infty)},Y^{(K)})]
\le
E_0 e^B\frac{B^K}{K!}.
\]
This completes the proof.
\end{proof}

\subsection{Proof of Corollary~\ref{cor:picard-nfe_main}}
\label{appd:picard_nfe}

We restate Corollary~\ref{cor:picard-nfe_main} for reference and start the main proof.
\begin{corollary}[Uniform-block Picard NFE]
\label{cor:picard-nfe}
Let $H:=T-\eta$ be the early-stopped sampling horizon. Choose a constant physical block width $h_0>0$ independent of $d$ and $\varepsilon$, and partition $[\eta,T]$ into
\[
N=\left\lceil \frac{H}{h_0}\right\rceil
\]
uniform physical-time blocks, with the last block possibly shorter. Under Assumption~\ref{ass:event-switching}, given $T=O(\log(d\log S/\varepsilon))$, let $B_{\max}=\max_n B_n$. Then $B_{\max}\le G_d$, and the choice
\[
K_p=\left\lceil
\max\left\{2eB_{\max},\frac{1}{\log 2}\log\frac{Nd e^{B_{\max}}}{\varepsilon}\right\}
\right\rceil
\]
ensures that the Picard endpoint TV error is at most $\varepsilon$. Consequently,
\[
K_p=O\left(\log\frac d{\varepsilon}\right),
\qquad
\NFE_{\mathrm{Picard}}=NK_p=O\left(\log\frac d{\varepsilon}\cdot \log\frac {d\log S}{\varepsilon} \right).
\]
\end{corollary}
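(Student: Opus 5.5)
The plan is to turn the per-block Hamming bound of Theorem~\ref{thm:picard-endpoint} into a TV bound at the final endpoint. Then we sum over blocks and solve for $K_p$ using Stirling-type bounds. First, $B_{\max}\le G_d$ is immediate: every $b_q\ge0$, so any single block sum $B_n$ is at most the full sum $G_d=\sum_n B_n$.

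\textbf{Coupling step.} Run the Picard sampler and the serial $\tau$-leaping sampler on the same pre-sampled seeds $(\xi_n)$. Let $Y_n^{(K_p)}$ and $Y_n^{(\infty)}$ be the respective block endpoints. Since the coupling is a valid one, $\TV(\Law(Y_N^{(K_p)}),\Law(Y_N^{(\infty)}))\le \mathbb P(Y_N^{(K_p)}\neq Y_N^{(\infty)})$. We then use a union bound over the first block where the two trajectories diverge, defining the event $F_n=\{Y_{n'}^{(K_p)}=Y_{n'}^{(\infty)}\ \forall n'<n,\ Y_n^{(K_p)}\neq Y_n^{(\infty)}\}$. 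On $F_n$ block $n$ starts from a common state, so Theorem~\ref{thm:picard-endpoint} applies. Markov's inequality together with the fact that $d_H$ takes integer values gives $\mathbb P(F_n)\le \mathbb E[d_H(Y_n^{(K_p)},Y_n^{(\infty)})\mathbf 1\{\text{same start}\}]\le d\,e^{B_n}B_n^{K_p}/K_p!$, where we used $E_0\le d$. This step is delicate because the seeds $\xi_n$ must be independent of the block's start state. We will handle it by conditioning on $\mathcal F_{n-1}$ and applying the theorem conditionally, which is legitimate since Assumption~\ref{ass:event-switching} holds for every pair of states.

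\textbf{Choosing $K_p$.} Summing the block bounds gives error at most $N d e^{B_{\max}} B_{\max}^{K_p}/K_p!$. From $K_p!\ge (K_p/e)^{K_p}$ we get $B_{\max}^{K_p}/K_p!\le (eB_{\max}/K_p)^{K_p}\le 2^{-K_p}$ once $K_p\ge 2eB_{\max}$. The second branch of the maximum, $K_p\ge \log_2(Nde^{B_{\max}}/\varepsilon)$, then makes the total at most $\varepsilon$.

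\textbf{Orders.} With constant $h_0$ we have $N=O(H)=O(T)=O(\log(d\log S/\varepsilon))$. By Lemma~\ref{lem:Gd-bound}, $B_{\max}\le G_d=O(\log(d/\varepsilon))$. Hence $\log(Nde^{B_{\max}}/\varepsilon)=O(\log(d/\varepsilon)+\log\log(d\log S/\varepsilon)+B_{\max})=O(\log(d/\varepsilon))$. Since $2eB_{\max}$ is also of this order, $K_p=O(\log(d/\varepsilon))$, and multiplying by $N$ gives the stated NFE.

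The main obstacle is the coupling and conditioning step, which requires the per-block expectation bound to survive conditioning on the past. A secondary concern is that the $\log S$ contribution enters only through $N$ and not through $K_p$; this relies on $G_d$ being bounded independently of $S$, which Lemma~\ref{lem:Gd-bound} provides.
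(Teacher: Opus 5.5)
Your proposal is correct and follows the paper's proof essentially step for step. Both use the same-seed coupling with a first-disagreement union bound over blocks, then $E_{n,0}\le d$ and $B_n\le B_{\max}$, then $K!\ge(K/e)^K$ with the two branches $K_p\ge 2eB_{\max}$ and $K_p\ge\log_2(Nde^{B_{\max}}/\varepsilon)$, and finally $N=O(T)$ together with Lemma~\ref{lem:Gd-bound} for the orders. Your explicit conditioning on the common block-start state, using that the pre-sampled seed $\xi_n$ is independent of it, makes rigorous a step the paper only asserts.
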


\begin{proof}
The role of the block count is to keep the physical width of each Picard block uniformly bounded. Since the early-stopped horizon has length $H=T-\eta$, choosing
\[
N=\left\lceil \frac{H}{h_0}\right\rceil
\]
implies that every non-final block has width $h_0$ and the final block has width at most $h_0$. Thus all block widths are $O(1)$. Since $h_0$ is a constant and $\eta\le T$,
\[
N\le \frac{T}{h_0}+1=O(T).
\]
The assumption $T=O(\log(d\log S/\varepsilon))$ therefore gives
\[
N=O\left(\log\frac {d\log S}{\varepsilon}\right).
\]
For block $n$, Theorem~\ref{thm:picard-endpoint_main} gives
\[
\mathbb E[d_H(Y_n^{(K_p)},Y_n^{(\infty)})]
\le
E_{n,0}e^{B_n}\frac{B_n^{K_p}}{K_p!}.
\]
By coupling the Picard and fixed-point block endpoints using the same block randomness,
\[
\TV(\Law(Y_n^{(K_p)}),\Law(Y_n^{(\infty)}))
\le
\mathbb E[d_H(Y_n^{(K_p)},Y_n^{(\infty)})].
\]
Using the first block where the two coupled chains disagree and then applying a union bound over blocks,
\[
E_{\mathrm{pic}}
\le
\sum_{n=1}^N E_{n,0}e^{B_n}\frac{B_n^{K_p}}{K_p!}.
\]
Since $E_{n,0}\le d$ and $B_n\le B_{\max}$,
\[
E_{\mathrm{pic}}
\le
Nd e^{B_{\max}}\frac{B_{\max}^{K_p}}{K_p!}.
\]
It is enough to make the last expression at most $\varepsilon$.

Using $K!\ge(K/e)^K$,
\[
\frac{B_{\max}^K}{K!}
\le
\left(\frac{eB_{\max}}{K}\right)^K.
\]
Thus it suffices that
\[
Nd e^{B_{\max}}\left(\frac{eB_{\max}}{K}\right)^K
\le
\varepsilon.
\]
Equivalently,
\[
\left(\frac{K}{eB_{\max}}\right)^K
\ge
\frac{Nd e^{B_{\max}}}{\varepsilon},
\]
or
\[
K\log\left(\frac{K}{eB_{\max}}\right)
\ge
\log\frac{Nd e^{B_{\max}}}{\varepsilon}.
\]
If $K\ge2eB_{\max}$, then $\log(K/(eB_{\max}))\ge\log2$. If also
\[
K\ge \frac{1}{\log2}\log\frac{Nd e^{B_{\max}}}{\varepsilon},
\]
then the previous inequality holds. This proves the displayed sufficient choice of $K_p$.

It remains to simplify the order. Since the blocks partition the fine grid,
\[
B_{\max}\le \sum_{n=1}^N B_n=G_d.
\]
By Lemma ~\ref{lem:Gd-bound},
\[
B_{\max}=O\left(\log\frac d{\varepsilon}\right),
\qquad
N=O\left(\log\frac {d\log S}{\varepsilon}\right).
\]
The first term in the maximum defining $K_p$ is therefore $O(\log(d/\varepsilon))$. For the second term,
\[
\log\frac{Nd e^{B_{\max}}}{\varepsilon}
=
\log N+
\log\frac d{\varepsilon}+B_{\max}.
\]
The term $\log N$ is lower order because $N$ is logarithmic, and $B_{\max}$ has the same logarithmic order. Hence the second term is also $O(\log(d/\varepsilon))$. Therefore
\[
K_p=O\left(\log\frac d{\varepsilon}\right).
\]
Combining this with $N=O(\log(d/\varepsilon))$ gives
\[
\NFE_{\mathrm{Picard}}=NK_p=O\left(\log\frac d{\varepsilon}\cdot \log\frac {d\log S}{\varepsilon} \right).
\]
\end{proof}

\subsection{Proof of the exponential-factorial contraction in continuous diffusion}
\label{appd:con_dis}

Consider the integral equation usually used in continuous diffusion analysis:
\[
X(t)
=
X_0+\int_0^t F(s,X(s))\,ds,
\qquad 0\le t\le T,
\]
and its Picard iteration
\[
X^{(k+1)}(t)
=
X_0+\int_0^t F(s,X^{(k)}(s))\,ds.
\]
Assume that \(F\) is Lipschitz in its state argument with a time-dependent coefficient \(L(s)\), namely
\[
\|F(s,x)-F(s,y)\|
\le
L(s)\|x-y\|.
\]
Define the adjacent Picard error
\[
e_k(t):=\|X^{(k+1)}(t)-X^{(k)}(t)\|.
\]
Then
\[
e_{k+1}(t)
\le
\int_0^t L(s)e_k(s)\,ds.
\]
Indeed, subtracting two consecutive Picard updates gives
\[
X^{(k+2)}(t)-X^{(k+1)}(t)
=
\int_0^t
\left[
F(s,X^{(k+1)}(s))-F(s,X^{(k)}(s))
\right]ds.
\]
Taking norms and applying the Lipschitz condition yields
\[
e_{k+1}(t)
=
\|X^{(k+2)}(t)-X^{(k+1)}(t)\|
\le
\int_0^t L(s)\|X^{(k+1)}(s)-X^{(k)}(s)\|\,ds
=
\int_0^t L(s)e_k(s)\,ds.
\]

Iterating this Volterra inequality gives an ordered-time expansion. 
Let
\[
B_T:=\int_0^T L(s)\,ds
\]
and assume \(e_0(t)\le E_0\) for all \(t\in[0,T]\). 
For \(K=1\),
\[
e_1(T)
\le
\int_0^T L(s_1)e_0(s_1)\,ds_1.
\]
For \(K=2\),
\[
e_2(T)
\le
\int_0^T L(s_1)
\int_0^{s_1}L(s_2)e_0(s_2)\,ds_2\,ds_1.
\]
Repeating this argument yields
\[
e_K(T)
\le
E_0
\int_{0<s_K<\cdots<s_1<T}
\prod_{\ell=1}^K L(s_\ell)\,
ds_K\cdots ds_1.
\]
The ordered simplex \(\{0<s_K<\cdots<s_1<T\}\) is one of the \(K!\) equal-ordering regions of \([0,T]^K\). Since the integrand \(\prod_{\ell=1}^KL(s_\ell)\) is symmetric in the variables \(s_1,\ldots,s_K\), we have
\[
\int_{0<s_K<\cdots<s_1<T}
\prod_{\ell=1}^K L(s_\ell)\,
ds_K\cdots ds_1
=
\frac{1}{K!}
\left(\int_0^T L(s)\,ds\right)^K
=
\frac{B_T^K}{K!}.
\]
Therefore,
\[
e_K(T)
\le
E_0\frac{B_T^K}{K!}.
\]

If \(X^\star\) denotes the fixed point of the integral equation, then the fixed-point error can be controlled by summing the adjacent errors:
\[
\|X^\star(T)-X^{(K)}(T)\|
\le
\sum_{\ell=K}^{\infty} e_\ell(T)
\le
E_0\sum_{\ell=K}^{\infty}\frac{B_T^\ell}{\ell!}.
\]
Using the elementary tail bound
\[
\sum_{\ell=K}^{\infty}\frac{B_T^\ell}{\ell!}
\le
e^{B_T}\frac{B_T^K}{K!},
\]
we obtain
\[
\|X^\star(T)-X^{(K)}(T)\|
\le
E_0e^{B_T}\frac{B_T^K}{K!}.
\]

For reverse SDEs with state-independent diffusion coefficient, the same argument applies pathwise when all Picard iterates are driven by the same Brownian path with fixed random seeds, since the stochastic integral term cancels after subtracting two consecutive iterates. Hence, the factor \(B^K/K!\) in our discrete first-hit Picard analysis should be viewed as the time-discrete version of the classical ordered-simplex factor in continuous Picard theory.

\section{Proofs of results in Section~\ref{sec:global-theory}}

\subsection{Proof of Proposition~\ref{prop:total-tv_main}}
\label{appd:total_tv}

We restate Proposition~\ref{prop:total-tv_main} for reference and start the main proof.

\begin{proposition}[Total TV error with absorbing $\tau$-leaping reference]
\label{prop:total-tv}
Assume the absorbing serial $\tau$-leaping bound above and the Picard endpoint bound in Theorem~\ref{thm:picard-endpoint_main}.  Then
\[
\begin{aligned}
\TV(\mu_0,A_\eta\#\nu_{\mathrm{pic}}^{\theta,\Delta,K_p})
\le
E_{\mathrm{term}}+\sqrt{\frac12E_{\mathrm{absTL}}}+
\sum_{n=1}^N E_{n,0}e^{B_n}\frac{B_n^{K_p}}{K_p!}.
\end{aligned}
\]
\end{proposition}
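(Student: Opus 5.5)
The plan is a three-term triangle inequality in total variation, with intermediate laws $A_\eta\#\mu_\eta^\star$ and $A_\eta\#\nu_{\mathrm{seq}}^{\theta,\Delta}$:
\[
\TV(\mu_0,A_\eta\#\nu_{\mathrm{pic}})
\le
\TV(\mu_0,A_\eta\#\mu_\eta^\star)
+\TV(A_\eta\#\mu_\eta^\star,A_\eta\#\nu_{\mathrm{seq}})
+\TV(A_\eta\#\nu_{\mathrm{seq}},A_\eta\#\nu_{\mathrm{pic}}).
\]
The first term is $E_{\mathrm{term}}$ by definition.

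For the second term, I will use that total variation does not increase under pushforward by the deterministic map $A_\eta$, which is a special case of the data-processing inequality. This removes $A_\eta$. Pinsker's inequality together with the absorbing $\tau$-leaping bound $\mathrm{KL}(\mu_\eta^\star\|\nu_{\mathrm{seq}}^{\theta,\Delta})\le E_{\mathrm{absTL}}$ then gives $\sqrt{\tfrac12 E_{\mathrm{absTL}}}$. The same data-processing step reduces the third term to $\TV(\nu_{\mathrm{seq}}^{\theta,\Delta},\nu_{\mathrm{pic}}^{\theta,\Delta,K_p})$.

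The third term is where the main work lies. I will couple the two samplers on one probability space. They share the initial state $\widehat y_{t_0}\sim q_T$ and the pre-sampled block randomness $(\xi_n)$. The serial sampler is identified with the Picard fixed point $Y^{(\infty)}$ of each block, as noted after Theorem~\ref{thm:picard-endpoint_main}. The coupling inequality gives $\TV\le \mathbb P(Y^{\mathrm{pic}}_N\neq Y^{\mathrm{seq}}_N)$. I then decompose the event $\{Y^{\mathrm{pic}}_N\neq Y^{\mathrm{seq}}_N\}$ according to the first block $n$ at whose endpoint the two chains disagree. On that event both chains enter block $n$ from the same start state and use the same $\xi_n$. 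So the disagreement is exactly the event that the $K_p$-th Picard iterate of block $n$ differs from that block's fixed point.

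Since Hamming distance takes integer values, $\mathbf 1\{Y\neq Y'\}\le d_H(Y,Y')$. The disagreement probability for block $n$ is therefore bounded by $\mathbb E[d_H(Y_n^{(K_p)},Y_n^{(\infty)})]$, and Theorem~\ref{thm:picard-endpoint_main} bounds this by $E_{n,0}e^{B_n}B_n^{K_p}/K_p!$. A union bound over $n=1,\dots,N$ yields the final sum, and adding the three terms completes the proof.

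The main obstacle I expect is justifying the application of Theorem~\ref{thm:picard-endpoint_main} conditionally on the common block start. The theorem is stated for a fixed block start, with $E_{n,0}$ and $B_n$ defined through expectations over the cell randomness. I will therefore condition on the $\sigma$-field generated by $\widehat y_{t_0}$ and $\xi_0,\dots,\xi_{n-1}$. On the event that the chains have agreed up to $t_n$, the block-$n$ randomness $\xi_n$ is independent of this $\sigma$-field. Applying the theorem conditionally and then averaging gives the bound with $E_{n,0}$ interpreted as the (averaged) initial residual. If that interpretation is too loose, I will fall back on the uniform bound $E_{n,0}\le d$.
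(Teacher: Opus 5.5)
Your proposal is correct and follows the paper's proof essentially step for step. Both use the same three-term triangle inequality through $A_\eta\#\mu_\eta^\star$ and $A_\eta\#\nu_{\mathrm{seq}}^{\theta,\Delta}$, data processing plus Pinsker for the serial term, and for the Picard term a shared-randomness coupling with first-disagreement decomposition, $\mathbf 1\{X\neq Y\}\le d_H(X,Y)$, and a union bound over blocks. Your explicit conditioning on $\widehat y_{t_0},\xi_0,\dots,\xi_{n-1}$, using independence of $\xi_n$, adds rigor the paper leaves implicit, and it correctly shows that $E_{n,0}$ should be read as an initial residual averaged over the random block start, with $E_{n,0}\le d$ as a safe fallback.
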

\begin{proof}
Insert the two intermediate distributions $A_\eta\#\mu_\eta^\star$ and $A_\eta\#\nu_{\mathrm{seq}}^{\theta,\Delta}$.  By the triangle inequality for total variation,
\[
\begin{aligned}
\TV(\mu_0,A_\eta\#\nu_{\mathrm{pic}}^{\theta,\Delta,K_p})
&\le
\TV(\mu_0,A_\eta\#\mu_\eta^\star) \\
&\quad+
\TV(A_\eta\#\mu_\eta^\star,A_\eta\#\nu_{\mathrm{seq}}^{\theta,\Delta}) \\
&\quad+
\TV(A_\eta\#\nu_{\mathrm{seq}}^{\theta,\Delta},A_\eta\#\nu_{\mathrm{pic}}^{\theta,\Delta,K_p}).
\end{aligned}
\]
The first term is $E_{\mathrm{term}}$.  Since deterministic maps cannot increase total variation,
\[
\TV(A_\eta\#P,A_\eta\#Q)
\le
\TV(P,Q),
\]
the second term is bounded by $\TV(\mu_\eta^\star,\nu_{\mathrm{seq}}^{\theta,\Delta})$.  Pinsker's inequality and the absorbing serial KL bound give
\[
\TV(\mu_\eta^\star,\nu_{\mathrm{seq}}^{\theta,\Delta})
\le
\sqrt{\frac12\mathrm{KL}(\mu_\eta^\star\|\nu_{\mathrm{seq}}^{\theta,\Delta})}
\le
\sqrt{\frac12E_{\mathrm{absTL}}}.
\]
For the third term, couple the serial and Picard samplers by using the same block random sources.  For any coupled random variables $X$ and $Y$,
\[
\TV(\Law(X),\Law(Y))
\le
\mathbb P(X\ne Y)
\le
\mathbb E[d_H(X,Y)].
\]
Applying this relation to the first block where the coupled endpoints differ and then using a union bound over blocks gives
\[
\TV(\nu_{\mathrm{seq}}^{\theta,\Delta},\nu_{\mathrm{pic}}^{\theta,\Delta,K_p})
\le
\sum_{n=1}^N E_{n,0}e^{B_n}\frac{B_n^{K_p}}{K_p!},
\]
by Theorem~\ref{thm:picard-endpoint_main}.  Combining the displayed inequalities proves the claim.
\end{proof}

\subsection{Proof of Theorem~\ref{thm:global-complexity-liang_main}}
\label{appd:global_comp}
We restate Theorem~\ref{thm:global-complexity-liang_main} for reference and start the main proof.

\begin{theorem}[Global complexity]
\label{thm:global-complexity-liang}
Let the target total variation error be $\varepsilon_{\mathrm{tot}}$.  Under Assumption~\ref{ass:score-estimation-error},~\ref{ass:bounded-score-estimate},~\ref{ass:event-switching}, suppose the score error is controlled at order $\varepsilon_{\mathrm{tot}}^2$, then choosing
\[
\eta=\Theta(\varepsilon_{\mathrm{tot}}/d),
\qquad
T=O\left(\log\frac{d\log S}{\varepsilon_{\mathrm{tot}}^2}\right),
\qquad
N_{\mathrm{fine}}=\widetilde O\left(\frac{dS}{\varepsilon_{\mathrm{tot}}^2}\right)
\]
controls the serial and early-stopping terms.  With constant physical block width, the number of blocks and Picard depth with
\[
N_{\mathrm{block}}
=
O\left(\log\frac{d\log S}{\varepsilon_{\mathrm{tot}}^2}\right),
\qquad
K_p
=
O\left(\log\frac{d}{\varepsilon_{\mathrm{tot}}}\right)
\]
makes the Picard term of order $\varepsilon_{\mathrm{tot}}$.  Consequently, the global time and space complexity are
\[
N_{\mathrm{block}}K_p
=
O\left(
\log\frac{d \log S}{\varepsilon_{\mathrm{tot}}^2}
\cdot
\log\frac{d}{\varepsilon_{\mathrm{tot}}}
\right),
\qquad
O(dM)=
O\left(d\frac{N_{\mathrm{fine}}}{N}\right)
=
\widetilde O\left(\frac{d^2S}{\varepsilon_{\mathrm{tot}}^2}\right)
\]
\end{theorem}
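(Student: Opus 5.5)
The plan is to start from the three-term decomposition of Proposition~\ref{prop:total-tv_main} and allot a budget of $\varepsilon_{\mathrm{tot}}/3$ to each term, choosing hyperparameters one term at a time.

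\emph{Early-stopping term.} By the early-stopping control inherited from \citep{liangabsorb}, $E_{\mathrm{term}}\lesssim d\eta$. Taking $\eta=\Theta(\varepsilon_{\mathrm{tot}}/d)$ with a small enough constant therefore makes this term at most $\varepsilon_{\mathrm{tot}}/3$.

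\emph{Serial term.} We need $E_{\mathrm{absTL}}\le \tfrac{2}{9}\varepsilon_{\mathrm{tot}}^2$, so that Pinsker gives a contribution of at most $\varepsilon_{\mathrm{tot}}/3$. We make each of the three summands $O(\varepsilon_{\mathrm{tot}}^2)$:
\begin{itemize}
\item The initialization term $de^{-T}\log S$ is handled by $T=\log(d\log S/\varepsilon_{\mathrm{tot}}^2)+O(1)$.
\item The score term $\varepsilon_{\mathrm{score}}$ is handled by the hypothesis that it is of order $\varepsilon_{\mathrm{tot}}^2$.
\item For the discretization term, substitute $\eta=\Theta(\varepsilon_{\mathrm{tot}}/d)$ and the chosen $T$. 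Then $(T+\log(M_{\mathrm{score}}/\eta))(T+\log\eta^{-1})^2$ is polylogarithmic in $d,S,1/\varepsilon_{\mathrm{tot}}$, so $N_{\mathrm{fine}}=\widetilde O(dS/\varepsilon_{\mathrm{tot}}^2)$ suffices.
\end{itemize}

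\emph{Picard term.} We invoke Corollary~\ref{cor:picard-nfe} with target accuracy $\varepsilon=\varepsilon_{\mathrm{tot}}/3$ and a constant physical width $h_0$. This gives $N_{\mathrm{block}}=\lceil (T-\eta)/h_0\rceil=O(T)=O(\log(d\log S/\varepsilon_{\mathrm{tot}}^2))$. It also gives $K_p=O(\log(d/\varepsilon_{\mathrm{tot}}))$, once we check that $B_{\max}\le G_d$ and $\log N$ are of that order. For this check we apply Lemma~\ref{lem:Gd-bound} with $\eta\asymp\varepsilon_{\mathrm{tot}}/d$: it yields $G_d=O(\log(T/\eta))=O(\log(d/\varepsilon_{\mathrm{tot}})+\log T)$, and $\log T=O(\log\log(d\log S/\varepsilon_{\mathrm{tot}}))$ is lower order. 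The sum $\sum_n E_{n,0}e^{B_n}B_n^{K_p}/K_p!$ is then bounded by $Nde^{B_{\max}}B_{\max}^{K_p}/K_p!\le\varepsilon_{\mathrm{tot}}/3$, exactly as in that corollary's proof. Multiplying the two counts gives the stated $N_{\mathrm{block}}K_p$.

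\emph{Space.} Each block holds $M=N_{\mathrm{fine}}/N_{\mathrm{block}}$ microstates of size $d$, so memory is $O(dM)$. Because $N_{\mathrm{block}}$ is only logarithmic, this is $\widetilde O(d^2S/\varepsilon_{\mathrm{tot}}^2)$.

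The main obstacle I expect is the Picard step. The bound uses $e^{B_{\max}}$ with $B_{\max}$ as large as $G_d=O(\log(d/\varepsilon_{\mathrm{tot}}))$, and it relies on the loglinear form $a(t)\asymp1/t$ assumed in Lemma~\ref{lem:Gd-bound}. I need to confirm this does not inflate $K_p$ beyond a single logarithm. It does not: the condition $K_p\ge 2eB_{\max}$ is itself only $O(\log(d/\varepsilon_{\mathrm{tot}}))$, and $\log(e^{B_{\max}})=B_{\max}$ enters only additively. I also need the fine grid to be fine enough that the Riemann-sum step in Lemma~\ref{lem:Gd-bound} is valid. This should follow from $N_{\mathrm{fine}}$ being polynomially large, although near $\eta$ a geometric fine grid may be needed to resolve the $1/t$ rate.
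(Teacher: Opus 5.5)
Your proposal is correct and follows essentially the same route as the paper's proof. Both use the three-term split of Proposition~\ref{prop:total-tv_main}: $\eta=\Theta(\varepsilon_{\mathrm{tot}}/d)$ for early stopping; $T$ and $N_{\mathrm{fine}}$ chosen so that $E_{\mathrm{absTL}}=O(\varepsilon_{\mathrm{tot}}^2)$ before Pinsker; and the Picard term handled by Corollary~\ref{cor:picard-nfe_main}, with $N_{\mathrm{block}}=O(T)$ and $B_{\max}\le G_d=O(\log(d/\varepsilon_{\mathrm{tot}}))$ from Lemma~\ref{lem:Gd-bound}. Your explicit $\varepsilon_{\mathrm{tot}}/3$ budgeting, treatment of the score term, and space count only spell out steps the paper leaves implicit; for the space bound, simply note $dM=dN_{\mathrm{fine}}/N_{\mathrm{block}}\le dN_{\mathrm{fine}}$.
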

\begin{proof}
The early-stopping term is $O(d\eta)$, so $\eta=\Theta(\varepsilon_{\mathrm{tot}}/d)$ makes it $O(\varepsilon_{\mathrm{tot}})$.  The serial contribution enters the total TV bound through Pinsker, so we require
\[
E_{\mathrm{absTL}}=O(\varepsilon_{\mathrm{tot}}^2).
\]
The initialization part $d e^{-T}\log S$ is then controlled by taking
\[
T=O\left(\log\frac{d\log S}{\varepsilon_{\mathrm{tot}}^2}\right).
\]
The discretization part of $E_{\mathrm{absTL}}$ is
\[
\frac{dS\bigl(T+\log(M_{\mathrm{score}}\eta^{-1})\bigr)
          \bigl(T+\log \eta^{-1}\bigr)^2}{N_{\mathrm{fine}}}.
\]
Thus it is at most $O(\varepsilon_{\mathrm{tot}}^2)$ whenever
\[
N_{\mathrm{fine}}
=
\widetilde O\left(\frac{dS}{\varepsilon_{\mathrm{tot}}^2}\right),
\]
where logarithmic factors in $d$, $1/\eta$, $1/\varepsilon_{\mathrm{tot}}$, and $M_{\mathrm{score}}$ are hidden.

For the Picard part, choosing a constant physical block width gives $N_{\mathrm{block}}=O(T)$.  Moreover, Lemma~\ref{lem:Gd-bound} yields
\[
B_{\max}
\le
G_d
=
O\left(\log\frac d{\varepsilon_{\mathrm{tot}}}\right).
\]
The safe depth choice in Corollary~\ref{cor:picard-nfe_main} then gives
\[
K_p
=
O\left(\log\frac d{\varepsilon_{\mathrm{tot}}}\right).
\]
Multiplying this by $N_{\mathrm{block}}=O(T)$ proves the claimed NFE bound.
\end{proof}
\section{Algorithm of the first-hitting truncation}
\label{appd:FHS_alg}
Algorithm~\ref{alg:fht} provides details about how our first-hitting truncation is implemented during the Picard sampling. We treat this first-hitting mechanism as a single abstract operation in the theory. Given the local proposal events in a block prefix, the first-hitting operation returns the state obtained by applying, for each coordinate, only its earliest proposed token. This abstraction keeps the analysis focused on the only property needed for convergence: first-hitting does not amplify event-level discrepancies.

\begin{algorithm}[ht]
\caption{First-Hitting Truncation (FHT)}
\label{alg:fht}
\begin{algorithmic}[1]
\REQUIRE Block-start state $\widehat y_{t_n}$ and jump sequence
$\Delta\widehat y^{(k)}_0,\ldots,\Delta\widehat y^{(k)}_{M-1}$
\ENSURE Truncated jump sequence
$\Delta\widehat y^{(k)}_0,\ldots,\Delta\widehat y^{(k)}_{M-1}$

\STATE For all coordinates $i$ and microsteps $j$, define
$h_{j,i}\gets \mathbf 1\{\Delta\widehat y^{(k)}_{j,i}\neq 0\}$.
\STATE Compute the prefix counts
$s_{j,i}\gets\sum_{\ell=0}^{j}h_{\ell,i}$
for all $i,j$ using a parallel prefix scan over $j$.
\STATE For all $i,j$ in parallel, set $\Delta\widehat y^{(k)}_{j,i}
\gets
\Delta\widehat y^{(k)}_{j,i}\,
\mathbf 1\{\widehat y_{t_n,i}=\mathrm{MASK}\}\,
\mathbf 1\{s_{j,i}=1\}.
$
\RETURN $\Delta\widehat y^{(k)}_0,\ldots,\Delta\widehat y^{(k)}_{M-1}$
\end{algorithmic}
\end{algorithm}

\section{Extra Experiment Results and Details}
In this section, we provide more details and results of the experiments.

\subsection{Assumption Verification}
\label{exp:assumption_test}

In this section, we empirically verify Assumption~\ref{ass:event-switching}. To assess the normalized event-switching sensitivity $L_\star$ along actual generation trajectories, we compute an on-trajectory weighted constant estimator
$L^{switch}_d=(\sum_q U_q)/(\sum_q\rho_q D_q)$.
Here $U_q$ is the cumulative candidate-event set difference at fine cell $q$, $D_q$ is the cumulative Hamming difference between adjacent Picard inputs at that cell, and $\rho_q$ is the one-proposal probability. Equivalently, $L_d^{\mathrm{switch}}$ is a weighted average of the local ratios $L_q=U_q/(\rho_qD_q)$ with weights proportional to $\rho_qD_q$. Since $\rho_qD_q$ is the exposure of cell $q$ in the Picard error recursion, this estimator emphasizes the cells that contribute most to aggregate iteration error.

Empirically, as shown in Figure~\ref{fig:L_star}, $L_d^{\mathrm{switch}}$ remains between $2.31$ and $2.50$ for $d\in[128,640]$, with fitted log-log slope $-0.043$, supporting the assumption that the effective normalized sensitivity is dimension-independent along generation trajectories.

\begin{figure}[ht]
    \centering
    \includegraphics[width=0.8\linewidth]{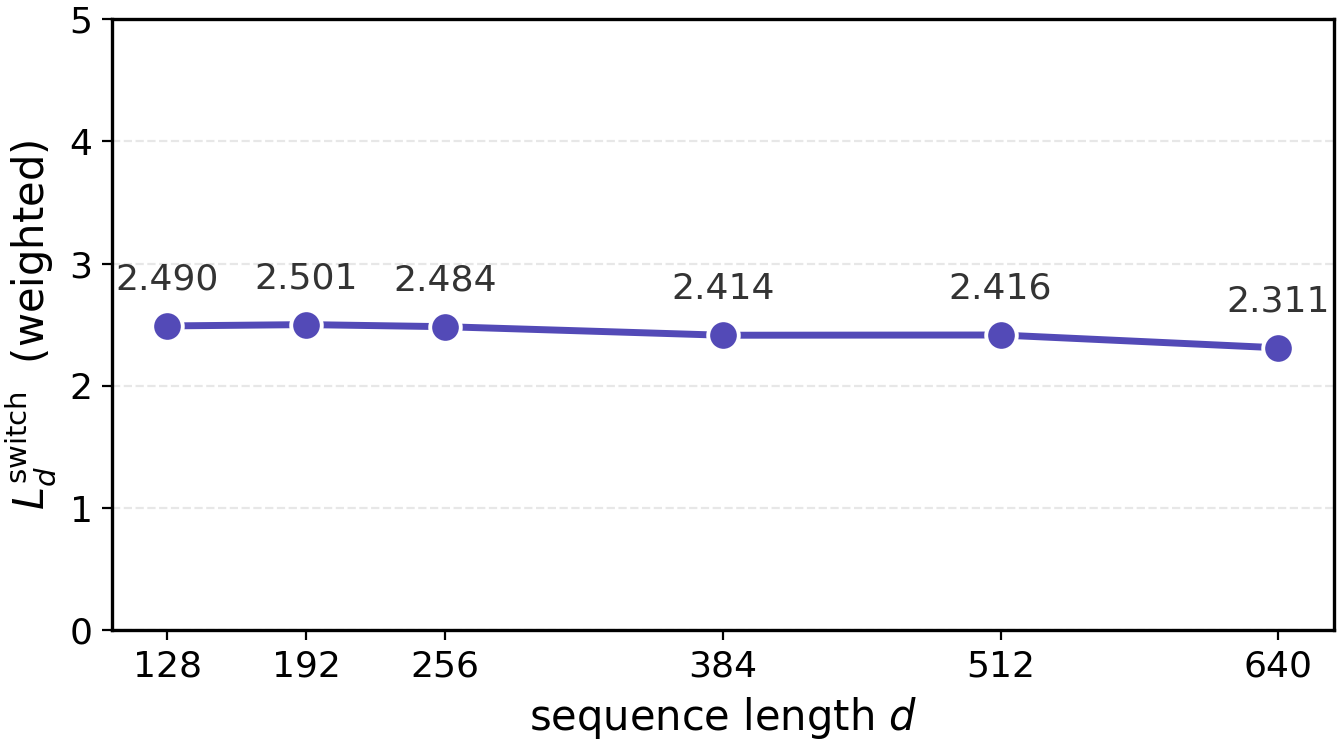}
    \caption{Visualization of $L_{\star}$.}
    \label{fig:L_star}
\end{figure}

\subsection{\texorpdfstring{$L_{\mathrm{score}}$}{Lscore} Estimation}
\label{exp:L_score}
 In this experiment, we measure the unweighted score sensitivity along RADD text generation at sequence length \(d=512\). The sampler uses \(64\) Picard blocks, \(M=4\) fine microsteps per block, and Picard depth \(K_p=2\), corresponding to \(256\) underlying fine-grid microsteps and \(128\) critical-path model evaluations. 
For a trajectory state \(z\), we construct a perturbed state \(z'\) by modifying revealed context tokens while keeping the mask pattern fixed. 
We then compare the normalized ordinary-token posteriors on the remaining MASK positions and compute
\[
    L_{\mathrm{score}}(z,z')
    =
    \frac{
    \sum_{i:z_i=\mathrm{MASK}}
    \left\|
    \widehat p_i^\theta(\cdot\mid z)
    -
    \widehat p_i^\theta(\cdot\mid z')
    \right\|_1
    }{
    d_H(z,z')
    } .
\]
As shown in Figure~\ref{fig:L_SCORE}, \(L_{\mathrm{score}}\) is largest when the context is highly incomplete and decreases as more tokens are revealed. This agrees with our expectation that when most coordinates are masked, a small context perturbation can affect many active posterior distributions; later in generation, fewer positions remain active and the posterior becomes more confident, leading to a smaller aggregate sensitivity.

\begin{figure}[ht]
    \centering
    \includegraphics[width=0.8\linewidth]{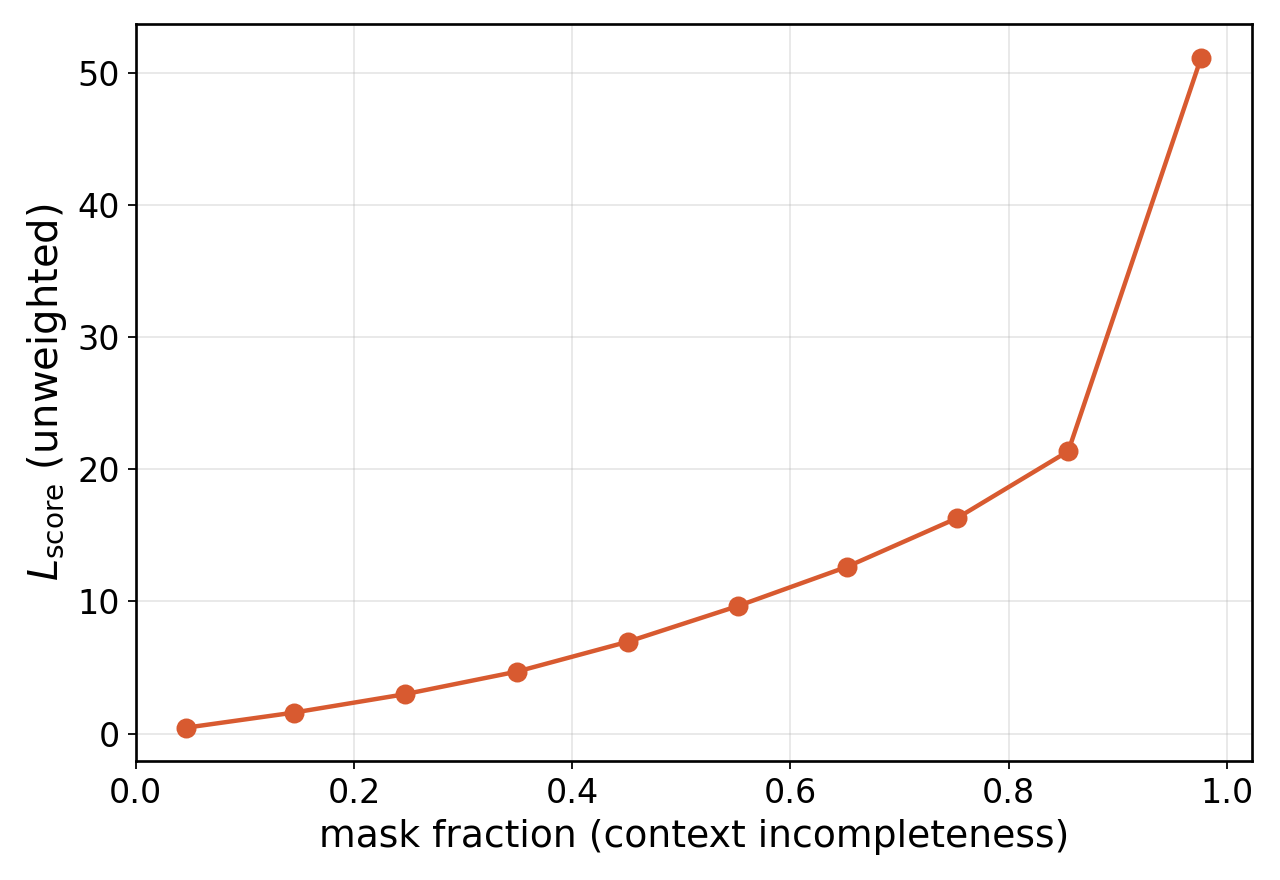}
    \caption{Visualization of $L_{\mathrm{score}}$.}
    \label{fig:L_SCORE}
\end{figure}

\subsection{Block Mass Estimation}
\label{exp:B_n_test}

To quantify the scale of \(B_n\) in Definition~\ref{def:block-masses}, we estimate it along a \(d=512\) RADD text-generation run with \(N=64\) blocks, \(M=4\) microsteps per block, \(K_p=2\), \(128\) sampling rounds, and \(16\) replay seeds per recorded state pair. As shown in Figure~\ref{fig:B_n}, the estimated block masses remain constant-scale: the mean is \(0.273\), the median is \(0.223\), the \(90\)-th percentile is \(0.528\), and the maximum is \(1.260\), with the smallest blocks close to zero. 
Thus most blocks have \(B_n<1\), while the largest values occur near the clean endpoint. 
This supports both the practical small-constant behavior assumed in the Picard contraction analysis and the need for early stopping to avoid the most singular terminal region.

\begin{figure}[ht]
    \centering
    \includegraphics[width=0.8\linewidth]{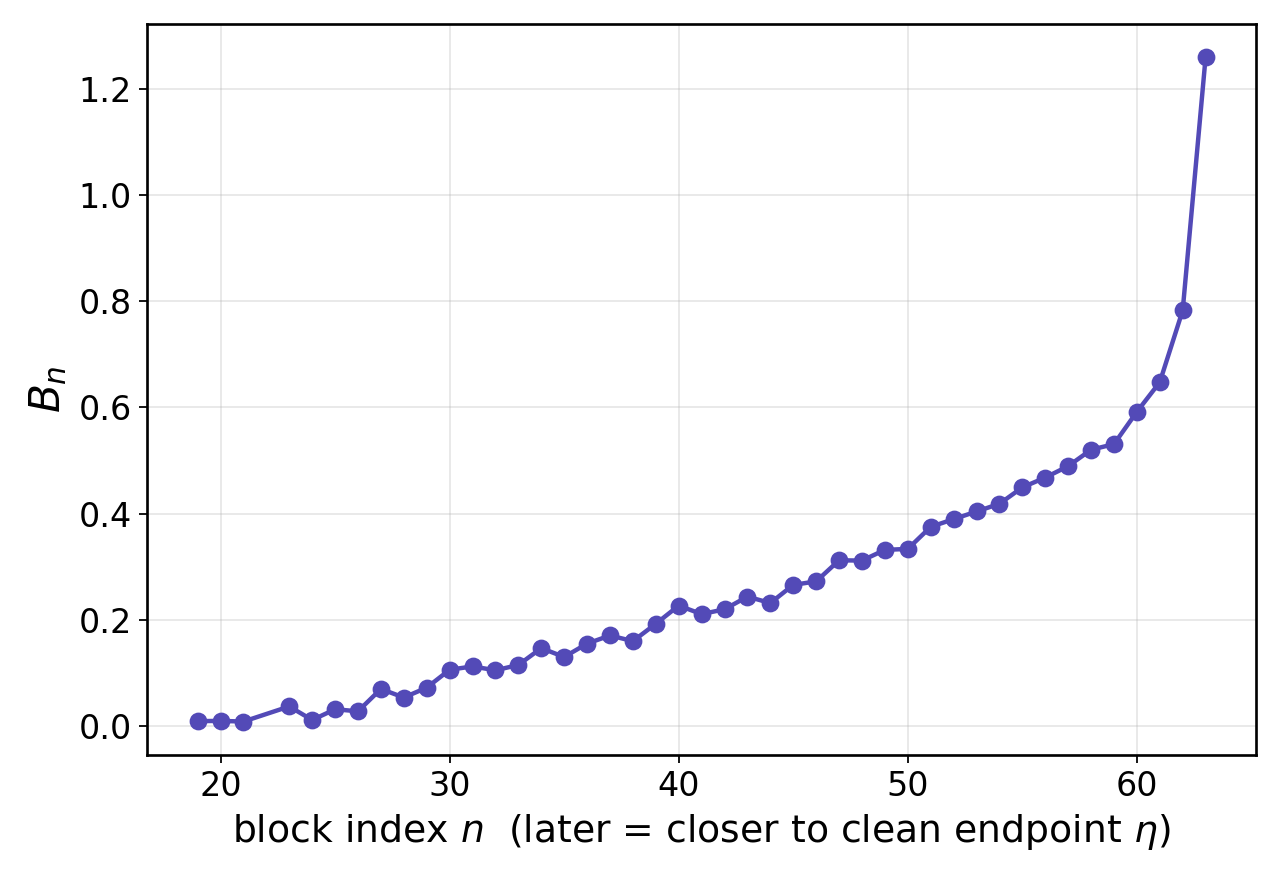}
    \caption{Visualization of $B_n$.}
    \label{fig:B_n}
\end{figure}

\subsection{Synthetic data}
In this section we provide more details about synthetic experiments. 
\label{appd:synthetic}
\subsubsection{2D Toy Model}
\paragraph{Chessboard}
The chessboard distribution has several key characteristics that make it an excellent test benchmark: (1) Discrete: The distribution is defined on a finite set of points on a two-dimensional grid; (2) Sparse: Nearly half of the grid points have a probability of exactly zero. A successful sampler must learn to restrict its generated samples strictly to the points that have non-zero probability mass (i.e., the support of the distribution); (3) Multi-modal: The probability mass is distributed across multiple, disconnected modes rather than being concentrated in a single region. The sampler is required to capture all of these distinct modes; (4) Structured: It exhibits a distinct, non-random geometric structure. This challenges the sampler's ability to reproduce the correct global pattern, rather than merely matching general statistical moments.

We conduct the experiment on a $8\times8$ chessboard distribution with varying Picard iteration depths $K_p$. We fix $N=40,M=80$ with totally 4096 samples. Runtimes and KL Divergence are averaged over 20 runs.

\begin{figure}[ht]
\begin{center}
\includegraphics[width=0.9\textwidth]{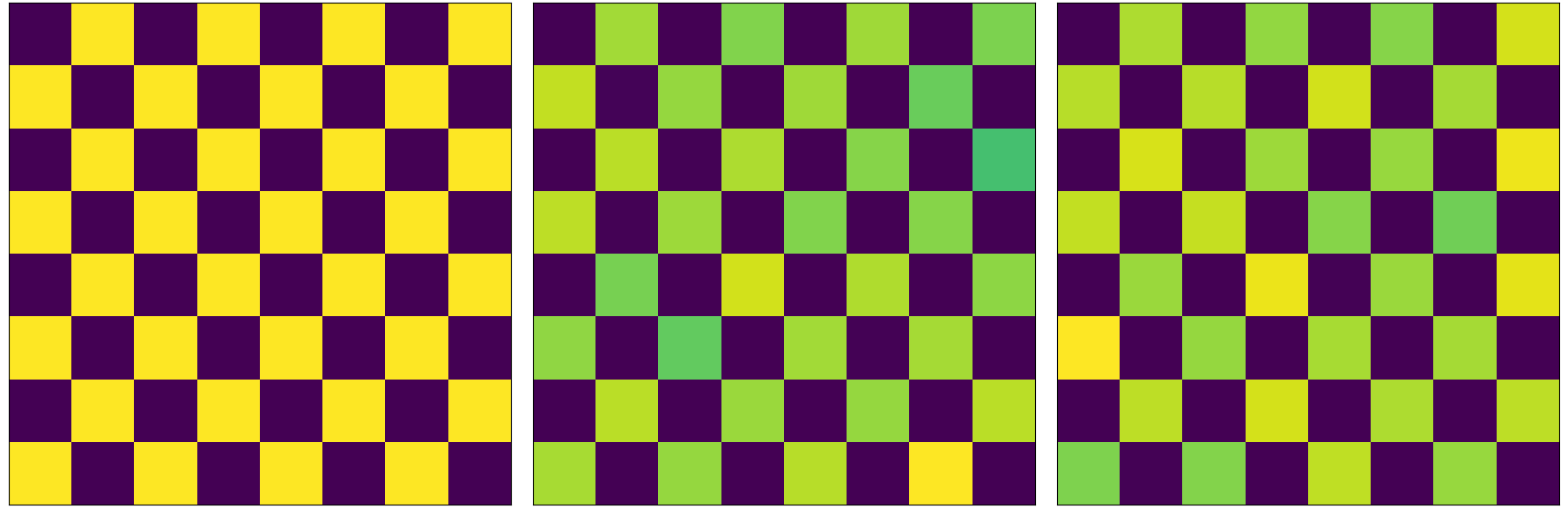}
\end{center}
\caption{Visualization of the chessboard experiments. (Left) The target distribution. (Middle) The Picard sampling result. (Right) The sequential sampling result.}
\end{figure}

\paragraph{Circle}
The ring distribution on a 2D discrete grid concentrates its entire probability mass on grid points located within an annulus defined by an inner radius $r_{in}$ and an outer radius $r_{out}$. A key characteristic is its Non-Convexity; the high-probability region encloses a central "hole" of zero probability, which leads to the distribution's support non-convex. Another defining feature is its Connectivity. Unlike the disjoint, multi-modal structure of the checkerboard distribution, the support of the ring distribution forms a single connected component, meaning any point on the ring can traverse to any other point through a series of steps to adjacent locations.

We conduct the experiment on a circle distribution at $32\times32$ 2D grid with varying Picard iteration depths $K_p$. We fix $N=40,M=80$ with totally 4096 samples. Runtime and KL Divergence are averaged over 20 runs.

\begin{figure}[ht]
\begin{center}
\includegraphics[width=0.9\textwidth]{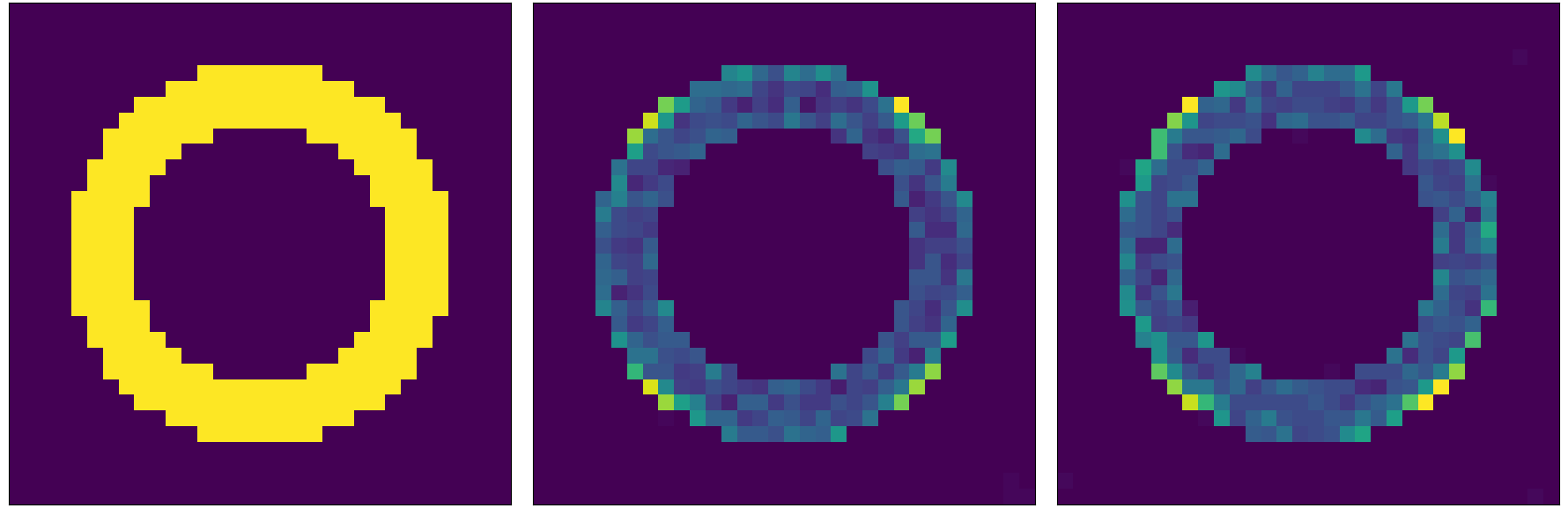}
\end{center}
\caption{Visualization of the circle experiments. (Left) The target distribution. (Middle) The Picard sampling result. (Right) The sequential sampling result.}
\end{figure}

\subsubsection{Dimensional Scaling}

\paragraph{Oracle target and metrics.}
The oracle conditional probabilities are computed exactly from the partially unmasked group state to remove score approximation error. Since each sequence contains $d/g$ independent groups, reliable group-level statistics can be obtained with a small batch size; we use only $8$ samples for every $d$ to avoid large-batch bandwidth effects. Quality is measured on the empirical group distribution.  
We report the per-coordinate group KL together with group TV and off-mode mass during schedule selection. The serial reference uses an oracle masked tau-leaping fine grid with $N_{\mathrm{fine}}(d)\approx d$. For the Picard sampler we fix $K_p=2$ and search over logarithmic block schedules, selecting the smallest block count whose sampling quality remain within prescribed margins of the serial reference. More visualization results are in Figure~\ref{fig:synthetic_scaling}.

\begin{figure}[ht]
    \centering

    \begin{minipage}[t]{0.48\linewidth}
        \centering
        \includegraphics[width=\linewidth]{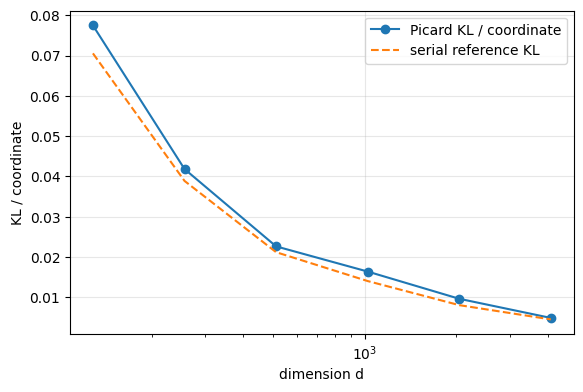}
        \centerline{\small (a) Quality under selected schedule}
    \end{minipage}
    \hfill
    \begin{minipage}[t]{0.48\linewidth}
        \centering
        \includegraphics[width=\linewidth]{figures/synthetic_block_scaling.png}
        \centerline{\small (b) Selected Picard block count}
    \end{minipage}

    \vspace{0.8em}

    \begin{minipage}[t]{0.48\linewidth}
        \centering
        \includegraphics[width=\linewidth]{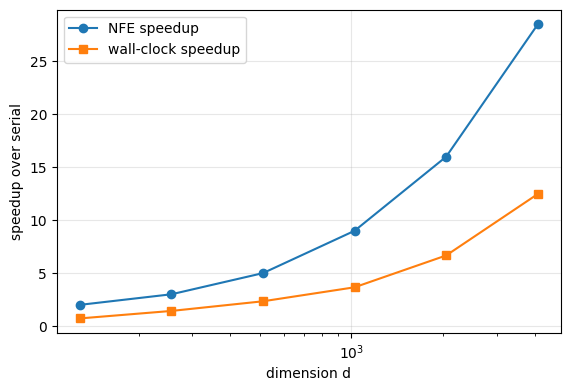}
        \centerline{\small (c) NFE and wall-clock speedup}
    \end{minipage}
    \hfill
    \begin{minipage}[t]{0.48\linewidth}
        \centering
        \includegraphics[width=\linewidth]{figures/synthetic_wallclock.png}
        \centerline{\small (d) Serial vs. Picard wall-clock}
    \end{minipage}

    \caption{
    Oracle synthetic quality-matched scaling experiment. 
    Panel (a) compares the per-coordinate group KL of the selected Picard schedule with the serial oracle reference. 
    Panel (b) shows that the selected number of Picard blocks grows much more slowly than the linear reference. 
    Panel (c) compares the critical-path NFE speedup with the measured wall-clock speedup. 
    Panel (d) reports the absolute wall-clock sampling time of the serial and Picard samplers.}
    \label{fig:synthetic_scaling}
\end{figure}

\subsection{Real-world data}
\label{appd:realworld}

In this section we provide more results and samples for text/image generation tasks. The real-data experiment code is built based on the open-source codebase of~\citep{ren2025fast}. 

Figure~\ref{fig:img_samples} demonstrates sample images generated by our parallel method. Table~\ref{tab:DDPD_res} shows the comparison results between DDPD \citep{liu2024think} and our parallel method.

Table~\ref{tab:memory} demonstrates the GPU memory cost under certain parameter settings for both image and text generation tasks. 

We would like to mention that, the MaskGIT we used in the image experiment utilizes the VQVAE \citep{razavi2019generating} to compress images into a discrete latent space with only 1024 codebook size and 16*16 sequence length, while the RADD for text generation works in a raw high-dimensional token space. Therefore, MaskGIT has much lower codebook and sequence dimensions, which can significantly decrease the active memory cost for each token sequence in the transformer. What’s more, a large portion of memory for MaskGIT is likely occupied by static model weights, which do not scale with the parallel width M. The dynamic increase from parallelization is a small fraction of the total footprint.

\begin{figure}[t]
\begin{center}
\includegraphics[width=0.8\textwidth]{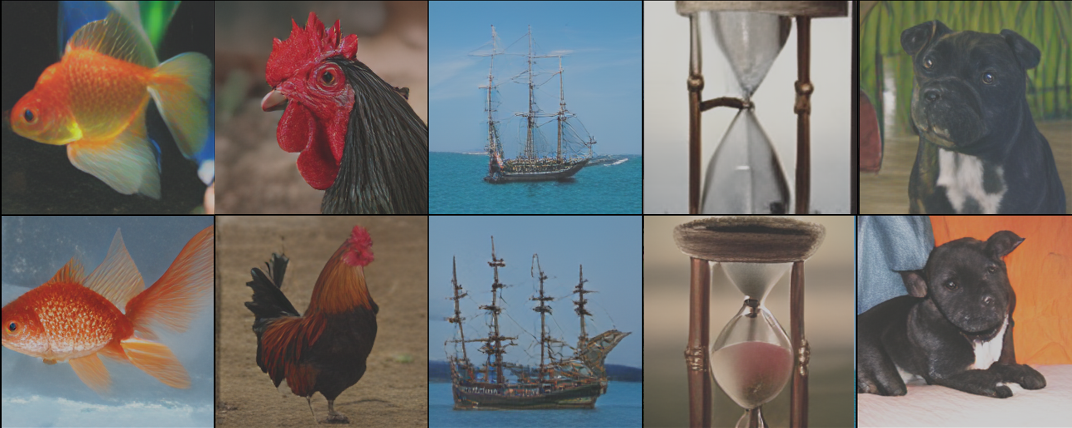}
\end{center}
\caption{Generated samples from the Imagenet experiments.}
\label{fig:img_samples}
\end{figure}

\begin{table}[htbp]
\centering
\captionsetup{skip=8pt}
\caption{Generative perplexity of texts generated by DDPD and Picard $\tau$-leaping}
\label{tab:DDPD_res}
\begin{tabular}{lcc}
\toprule
\textbf{Method} & \textbf{NFE} & \textbf{Perplexity} \\
\midrule
DDPD-small softmax & 1024 & 41.342 \\
DDPD-medium softmax & 1024 & 34.166 \\
DDPD-small sigmoid & 1024 & 30.587 \\
DDPD-medium sigmoid & 1024 & 28.025 \\
Ours & 512 & 25.876 \\
\bottomrule
\end{tabular}
\end{table}

\begin{table}[htbp]
\centering
\captionsetup{skip=8pt}
\caption{Parameter settings and GPU memory cost for image and text generation.}
\label{tab:memory}
\begin{tabular}{lccccc}
\toprule
Tasks & \textbf{Seq. Cost} &\textbf{Para. Cost} & \textbf{M}  & \textbf{N}  & \textbf{$K_p$} \\ \midrule
Text  & 6.7GB             & 11.6GB            & 8  & 32 & 2    \\ \midrule
Image & 3.0GB             & 3.7GB             & 10 & 20 & 2    \\ 
\bottomrule
\end{tabular}
\end{table}

\clearpage

\end{document}